\documentclass{article}

\usepackage{iclr2027/iclr2027_conference,times}
\usepackage[english]{babel}
\usepackage[autostyle=true]{csquotes}
\usepackage{mathtools,amssymb,amsthm}
\usepackage{siunitx}
\usepackage{microtype}
\usepackage{thmtools}
\usepackage{thm-restate}
\usepackage{tikz}
\usepackage{booktabs}

\usepackage{url}
\usepackage{hyperref}
\usepackage[nameinlink,capitalise,noabbrev]{cleveref}

\hypersetup{hypertexnames=false}

\newcommand*{\RR}{\mathbb{R}}
\newcommand*{\QQ}{\mathbb{Q}}
\newcommand*{\ZZ}{\mathbb{Z}}
\newcommand*{\NN}{\mathbb{N}}
\newcommand*{\CPWL}{\mathrm{CPWL}}
\newcommand*{\symgrp}[1]{S_{#1}}
\newcommand*{\maxfn}[1]{\max\nolimits_{#1}}
\newcommand*{\sortedcone}{\mathcal{C}}
\newcommand*{\bigO}{\mathcal{O}}

\DeclarePairedDelimiter{\abs}{\lvert}{\rvert}
\DeclarePairedDelimiter{\floor}{\lfloor}{\rfloor}
\DeclarePairedDelimiter{\ceil}{\lceil}{\rceil}
\DeclarePairedDelimiter{\set}{\lbrace}{\rbrace}
\DeclarePairedDelimiterX{\scalarp}[2]{\langle}{\rangle}{#1, #2}
\DeclarePairedDelimiterX{\segment}[2]{[}{]}{#1,#2}
\DeclarePairedDelimiterX{\mset}[1]{\lbrace}{\rbrace}{%
  \!\delimsize\lbrace #1 \delimsize\rbrace\!%
}
\DeclarePairedDelimiterX{\rset}[2]{\lbrace}{\rbrace}{%
  #1%
  \nonscript\:\delimsize\vert
  \allowbreak
  \nonscript\:%
  \mathopen{}%
  #2%
}
\DeclarePairedDelimiterX{\rmset}[2]{\lbrace}{\rbrace}{%
  \!\delimsize\lbrace
  #1%
  \nonscript\:\delimsize\vert
  \allowbreak
  \nonscript\:%
  \mathopen{}#2%
  \delimsize\rbrace\!%
}

\DeclareMathOperator{\conv}{conv}
\DeclareMathOperator{\relu}{ReLU}
\DeclareMathOperator{\sym}{sym}
\DeclareMathOperator{\Span}{span}

\theoremstyle{plain}
\newtheorem{theorem}{Theorem}[section]
\newtheorem*{theorem*}{Theorem}
\newtheorem{proposition}[theorem]{Proposition}
\newtheorem{corollary}[theorem]{Corollary}
\newtheorem{lemma}[theorem]{Lemma}
\newtheorem{conjecture}[theorem]{Conjecture}
\theoremstyle{definition}

\theoremstyle{remark}
\newtheorem{remark}[theorem]{Remark}

\newcommand{\affilmark}[1]{\textsuperscript{\normalfont #1}}
\newlength{\authorblockwidth}
\title{Shallower ReLU Network Representations\\via Exact Linear Algebra}

\author{%
  \parbox[t]{\authorblockwidth}{%
    \centering
    \bfseries
    \begin{tabular}{@{}%
        c@{\hspace{1.15em}}%
        c@{\hspace{1.15em}}%
        c@{\hspace{1.15em}}%
      c@{}}
      Kilian Rueß\affilmark{1,}%
      \thanks{Lead author. All remaining authors are listed
      alphabetically by surname.}
      &
      Gennadiy Averkov\affilmark{2}
      &
      Florestan Brunck\affilmark{3}
      &
      Moritz Grillo\affilmark{4}
      \\
      Christoph Hertrich\affilmark{1}
      &
      Georg Loho\affilmark{5}
      &
      Jack Stade\affilmark{3}
      &
      Moritz Stargalla\affilmark{1}
      \\
      \multicolumn{4}{c}{%
        Matthew Sun\affilmark{6}%
        \hspace{1.3em}%
        Martin Winter\affilmark{4}%
      }
    \end{tabular}
    \\[7pt]
    \normalfont\footnotesize
    \affilmark{1}\,University of Technology Nuremberg
    \qquad
    \affilmark{2}\,Brandenburg University of Technology
    Cottbus--Senftenberg
    \\[1.5pt]
    \affilmark{3}\,University of Copenhagen
    \qquad
    \affilmark{4}\,Max Planck Institute for Mathematics in the Sciences
    \\[1.5pt]
    \affilmark{5}\,Freie Universität Berlin
    \qquad
    \affilmark{6}\,Amador Valley High School
    \\[6pt]
    \fontsize{8}{9.5}\selectfont
    \texttt{%
      \{kilian.ruess,christoph.hertrich,moritz.stargalla\}@utn.de%
    }
    \\[0.5pt]
    \begin{tabular}{@{}%
        c@{\hspace{1em}}%
        c@{\hspace{1em}}%
      c@{}}
      \texttt{averkov@b-tu.de}
      &
      \texttt{\{flbr,jast\}@di.ku.dk}
      &
      \texttt{\{moritz.grillo,martin.winter\}@mis.mpg.de}
    \end{tabular}
    \\[0.5pt]
    \texttt{georg.loho@math.fu-berlin.de}
    \hspace{1.4em}
    \texttt{mattsun1@mit.edu}%
  }%
}

\iclrfinalcopy

\begin{document}
\maketitle
\begin{abstract}
  We study the depth required by ReLU networks to exactly represent piecewise linear functions, focusing specifically on the maximum function. This problem has recently received significant attention in both the ML and TCS literature. We prove that \(\maxfn{n}(x)=\max\{x_1,\ldots,x_n\}\) is exactly representable with two hidden layers for every \(n\leq 12\). Previously, this was only known up to \(n\leq5\) \citep[STOC'26]{bakaevBetterNeuralNetwork2026}.
We obtain our constructions through an exact computer-assisted search within a space of candidate solutions: After a symmetry reduction, we obtain a finite system of linear equations over \(\QQ\) such that any solution yields a valid representation of the maximum function.

The resulting constructions have a structured first hidden layer, which enables recursive substitution into deeper networks. This yields an exact ReLU representation of \(\maxfn{n}\) with at most \(\ceil*{\log_6(n/2)}+1\) hidden layers.
Consequently, every continuous piecewise-linear function on \(\RR^d\) admits an exact representation with at most \(\ceil*{\log_6((d+1)/2)}+1\) hidden layers; in particular, two hidden layers suffice for \(d\leq 11\). Again, these results improve upon \cite[STOC'26]{bakaevBetterNeuralNetwork2026}, who proved analogous logarithmic bounds with base three.

\end{abstract}
\section{Introduction}
We study the minimum number of hidden layers required to represent continuous piecewise-linear functions exactly by ReLU networks of unrestricted width. A scalar-output ReLU network with \(\ell\) hidden layers consists of affine maps \(T_i \colon \RR^{n_{i-1}}\to\RR^{n_i}\), for \(i=1,\dots,\ell+1\), with \(n_{\ell+1}=1\), and computes
\[
  f = T_{\ell+1} \circ \relu \circ \cdots \circ \relu \circ\, T_1,
\]
where \(\relu(x)\coloneqq\max\set{0,x}\) is applied coordinate-wise after each of the first \(\ell\) affine maps. For a given function \(f\), we ask for the smallest possible \(\ell\) when the hidden-layer widths are unrestricted.

Every ReLU network computes a continuous piecewise-linear (CPWL) function: the input space can be partitioned into finitely many convex polyhedral regions on each of which the network computes an affine function; conversely, every CPWL function admits an exact ReLU representation~\citep{aroraUnderstandingDeepNeural2018}. Exactness is essential here: with unrestricted width, one hidden layer networks can approximate continuous functions arbitrarily well on compact sets~\citep{leshnoMultilayerFeedforwardNetworks1993}, and approximation of \(\maxfn{n}\) is studied in~\cite{safranHowManyNeurons2024}. Our question is instead how many layers are required for equality on the entire domain.

As shown in previous work \citep{hertrichLowerBoundsDepth2023}, the representation problem for general \(\CPWL\) functions can be reduced to that of representing maximum functions. Define
\[
  \maxfn{n}(x) \coloneqq \max\set{x_1,\dots,x_n},
\]
and let \(\CPWL_d\) denote the space of \(\CPWL\) functions \(\RR^d\to\RR\). The generalized hinging-hyperplane theorem~\citep{wangGeneralizationHingingHyperplanes2005} states that every \(f\in\CPWL_d\) can be written as
\begin{equation}\label{eq:Wang-Sun}
  f = \sum_{i=1}^s \sigma_i\maxfn{d+1}(A_i(x)),
\end{equation}
where \(\sigma_i\in\set{\pm1}\) and each \(A_i\colon\RR^d\to\RR^{d+1}\) is affine. Hence a hidden-layer upper bound for \(\maxfn{d+1}\) extends to every function in \(\CPWL_d\): the \(\maxfn{d+1}\) terms can be evaluated in parallel, and their outputs can be combined by the final affine map. Conversely, any lower bound on \(\maxfn{d+1}\) would imply the same lower bound on \(x\mapsto\max\set{0,x_1,\dots,x_d}\), yielding a worst-case lower bound over \(\CPWL_d\); see \cite{hertrichLowerBoundsDepth2023}. Maximum functions therefore provide both universal upper bounds and worst-case lower-bound witnesses for exact representation at unrestricted width.

\subsection{Depth bounds for exact maximum representations}

The elementary identity
\begin{equation}\label{eq:max_with_relu}
  \max\set{x_1,x_2}
  =
  \relu(x_1)-\relu(-x_1)+\relu(x_2-x_1)
\end{equation}
represents \(\maxfn{2}\) with one hidden layer. After duplicating inputs if necessary, arranging pairwise maxima in a balanced binary tree yields a representation of \(\maxfn{n}\) with \(\ceil{\log_2 n}\) hidden layers and, by Wang-Sun~\eqref{eq:Wang-Sun}, a \(\ceil{\log_2(d+1)}\)-hidden-layer representation of every function in \(\CPWL_d\)~\citep[ICLR'18]{aroraUnderstandingDeepNeural2018}.

Despite substantial prior work, no function is known to require more than two hidden layers for exact representation when width is unrestricted. \cite[NeurIPS'21 / SIDMA'23]{hertrichLowerBoundsDepth2023} connected the problem to polyhedral geometry and conjectured that the balanced binary-tree construction \(\ceil{\log_2 n}\) is optimal for \(\maxfn{n}\). \cite[ICLR'23]{haaseLowerBoundsDepth2023} proved the conjectured lower bound for networks with integral weights. For decimal fractions~\cite[ICLR'25]{averkovExpressivenessRationalReLU2025} showed that representing \(\max\set{0,x_1,\ldots,x_n}\) requires at least \(\ceil*{\log_3(n+1)}\) hidden layers. Other lower bounds assume compatibility of the breakpoint set with the braid-fan~\citep[NeurIPS'25]{grilloDepthBoundsNeuralNetworks2025}, focus on monotone or input-convex networks \cite{bakaevDepthMonotoneReLU2025,valerdiMinimalDepthNeural2026}, or impose a width bound when the number of hidden layers is fixed~\citep[COLT'26]{safranDepthHierarchyComputing2026}.

Without additional assumptions on the weights or breakpoints, however, the binary-tree construction is not optimal: \cite[STOC'26]{bakaevBetterNeuralNetwork2026} disproved the conjecture of \cite{hertrichLowerBoundsDepth2023} by constructing a two-hidden-layer representation of \(\maxfn{5}\), and then bootstrapped this construction to show that \(\ceil{\log_3(n-2)}+1\) hidden layers suffice for \(\maxfn{n}\), \(n\ge4\). Through the Wang-Sun decomposition~\eqref{eq:Wang-Sun}, this yields \(\ceil{\log_3(d-1)}+1\) hidden layers for every function in \(\CPWL_d\), \(d\ge3\). Thus, prior to the present work, the best known unrestricted upper bound has base-\(3\) logarithmic scaling, matching the lower bound known under the more restrictive assumption of decimal-fraction weights.

\subsection{Main results}

Our first result is an exact two-hidden-layer representation of \(\maxfn{12}\) with rational weights, extending the largest previously known arity from \(5\) to \(12\)~\citep{bakaevBetterNeuralNetwork2026}. Duplicating input coordinates gives the same upper bound for every smaller arity; together with the known one-hidden-layer lower bound~\citep[Proposition~2.2]{mukherjeeLowerBoundsBoolean2017}, this determines the minimum number of hidden layers for every \(3\le n\le12\).

\begin{restatable}{theorem}{maxtwelvethm}
  \label{thm:max-12}
  For every integer \(3\le n\le12\), two hidden layers are necessary and sufficient to represent \(\maxfn{n}\) exactly by a ReLU network. Moreover, such a representation exists with rational weights.
\end{restatable}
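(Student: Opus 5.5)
The statement splits into a lower bound and an upper bound, and I would treat them separately.

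\emph{Necessity.} For \(n\ge3\), \(\maxfn{n}\) cannot be represented with one hidden layer; this is \citep[Proposition~2.2]{mukherjeeLowerBoundsBoolean2017}. For completeness I would recall the short reason. A one-hidden-layer network computes a sum of an affine function and terms \(\relu(\scalarp{a}{x}+b)\), so its non-differentiability locus is a union of full hyperplanes. For \(n\ge3\), the locus of \(\maxfn{n}\) contains pieces such as \(\set{x_1=x_2>x_3}\) that are not contained in any full hyperplane along which \(\maxfn{n}\) is non-smooth. Concretely, on \(\set{x_1=x_2<x_3}\) the function is smooth. A careful argument compares the jumps in the gradient across the hyperplane \(x_1=x_2\) in these two regions: the jump from a single ReLU term is constant along its hyperplane, so the two jumps would have to agree, but one is nonzero and the other is zero.

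\emph{Sufficiency, reduction to \(n=12\).} Given a two-hidden-layer network \(N\) for \(\maxfn{12}\) and \(n\le12\), I would precompose \(N\) with the linear map \(\RR^n\to\RR^{12}\) that duplicates \(x_n\) into the coordinates \(n+1,\dots,12\). This map can be absorbed into the first affine layer and keeps rational weights, so it suffices to construct a rational two-hidden-layer representation of \(\maxfn{12}\).

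\emph{Sufficiency, construction for \(n=12\).} I would follow the approach announced in the abstract.
\begin{enumerate}
\item Fix a finite, symmetric dictionary of candidate first-layer neurons, namely ReLUs of linear forms \(\scalarp{a}{x}\) with small integer coefficient patterns. Also fix candidate second-layer neurons \(\relu(\sum_j c_j\,h_j(x))\) whose combination patterns are taken from a chosen family of orbits under \(\symgrp{12}\). Use a symmetric ansatz in which the unknown coefficients are constant on orbits, which is the symmetry reduction.
\item Every resulting output function is CPWL and positively homogeneous, and it is linear on each cell of a common refinement of the braid fan. To express that it equals \(\maxfn{12}\), I would restrict to the sorted cone \(\sortedcone=\set{x_1\ge\dots\ge x_{12}}\), which is enough by symmetry.
\item The inner ReLUs are fixed a priori, so on each cell of a fixed fan they are linear. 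If the second-layer combinations are also fixed, the outer ReLUs are linear on each cell too. What remains unknown is the output weights together with the orbit coefficients. These enter linearly once the second-layer hidden patterns are fixed, and equality on every maximal cell of the refined fan inside \(\sortedcone\) becomes a finite linear system over \(\QQ\).
\item Solve this system by exact rational Gaussian elimination, and certify the solution by exact verification: check equality of the affine pieces on every cell, or equivalently at enough rational points per cell.
\end{enumerate}

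\emph{Expected obstacle.} The main difficulty is choosing the candidate space so that the system becomes both finite and feasible. It must be rich enough to contain a solution, yet small enough after symmetry reduction that enumerating the cells of the refined fan is tractable for \(12\) variables. I would first try the patterns that generalize the \(\maxfn{5}\) construction of \citep{bakaevBetterNeuralNetwork2026}, for example second-layer neurons built from maxima over subsets of the inputs, and enlarge the orbit families until the system becomes solvable.
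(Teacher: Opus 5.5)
\textbf{There is a genuine gap in sufficiency.} Your necessity argument is fine. The paper cites \citep[Proposition~2.2]{mukherjeeLowerBoundsBoolean2017} and restricts to inputs \((x_1,x_2,0,\dots,0)\); your gradient-jump argument along \(x_1=x_2\) is the standard proof of that fact. Reducing all \(n\le 12\) to \(n=12\) by duplicating coordinates is also fine.

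The problem is sufficiency: what you give is a search procedure, not a proof. You never fix a candidate family for which the linear system is known to be consistent. The step ``enlarge the orbit families until the system becomes solvable'' has no guarantee of ever producing a solution, since an inconsistent system only rules out the family you chose. The theorem is an existence claim, and for \(6\le n\le 12\) nothing in your proposal establishes that a two-hidden-layer representation exists.

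The paper closes this by committing to a specific ansatz and exhibiting an exactly verified rational solution:
\begin{itemize}
\item The first hidden layer consists only of pairwise maxima \(m_{ij}=\max\set{x_i,x_j}\).
\item Each second-layer unit is \(\Phi_{A,B}=\max\set{\sum_{(i,j)\in A}m_{ij},\sum_{(i,j)\in B}m_{ij}}\), where \(A,B\) are multisets of \(k=\floor{(n-1)/2}\) pairs. Smaller \(k\) is excluded by a dimension obstruction.
\item Each \(\Phi_{A,B}\) is symmetrized over \(\symgrp{n}\), and the resulting system is solved over \(\QQ\). The certificate is an actual solution, with 16536 nonzero coefficients for \(n=12\).
\item The rank-\(2\) maxout network is then converted to a ReLU network of the same depth.
\end{itemize}

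This specific choice is also what makes the computation feasible and linear, and your steps 3--4 would run into trouble on both counts.
\begin{itemize}
\item \emph{Linearity.} Only the output-layer weights enter linearly. Any weights inside the second-layer ReLUs must be fixed in advance, as the paper does by fixing \(A\) and \(B\). If those inner weights are left free, together with per-cell activation patterns, the problem becomes bilinear rather than a linear system.
\item \emph{Feasibility.} With general first-layer units \(\relu(\scalarp{a}{x})\), you must refine the braid fan by all their hyperplanes and then by the second-layer breakpoints, and enumerate the cells inside \(\sortedcone\subset\RR^{12}\). That is hopeless at this size: the paper's system for \(n=12\) already involves about \(1.5\cdot 10^6\) distinct ReLU directions.
\end{itemize}

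Pairwise maxima avoid this. On \(\sortedcone\), each \(m_{ij}(\sigma x)\) is just the coordinate \(x_{\max\set{\sigma^{-1}(i),\sigma^{-1}(j)}}\), so the first layer is linear on the whole cone. Every symmetrized block is therefore a sum of maxima of two linear forms, that is, a linear part plus \(\sum_d c_{A,B,d}\relu(d^\top x)\). The identity is then certified without any cell enumeration, by two conditions:
\begin{itemize}
\item the coefficient of each ReLU direction \(d\) cancels, after merging opposite or parallel directions and absorbing sign-definite ones;
\item the remaining linear part equals \(x_n\).
\end{itemize}
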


The solutions have a highly structured first hidden layer, which conceptually computes only pairwise maxima. This structure enables an efficient recursive construction. Similarly to the construction of \citet{bakaevBetterNeuralNetwork2026}, which recursively reuses lower-arity maximum representations, we substitute such representations into the first hidden layer. In our construction, each additional hidden layer increases the represented arity by a factor of six; see \Cref{sec:bootstrapping}. This strengthens the resulting base-\(3\) bound of \citet{bakaevBetterNeuralNetwork2026} to base \(6\).

\begin{restatable}{corollary}{bootstrapcor}
  \label{cor:base-six}
  For every integer \(n\ge1\), the function \(\maxfn{n}\) has an exact ReLU representation with at most
  \[
    \ceil*{\log_6\! \left(\frac n2\right)} + 1
  \]
  hidden layers.
\end{restatable}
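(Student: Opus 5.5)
The plan is an induction on the number of hidden layers, driven by the structure of the two-hidden-layer representation of \(\maxfn{12}\) behind \Cref{thm:max-12}. I will use the structural fact announced before the corollary: the first hidden layer of that representation only computes pairwise maxima. Concretely, I assume it can be written as
\[
  \maxfn{12}(x) = T_3\bigl(\relu\bigl(T_2(\relu(T_1 x))\bigr)\bigr),
\]
where every first-layer neuron, after grouping with the linear terms of \eqref{eq:max_with_relu}, feeds into the second layer only through affine combinations of the coordinates \(x_i\) and of the pairwise maxima \(\max\{x_i,x_j\}\). Equivalently, the second-layer pre-activations are affine in the vector \(\bigl(x_i\bigr)_i\) and \(\bigl(\max\{x_i,x_j\}\bigr)_{i<j}\).

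The key step is the substitution argument. Split \(n\le 2\cdot 6^{k}\) inputs into \(12\) blocks \(B_1,\dots,B_{12}\), each of size at most \(6^{k-1}\cdot 2/2\cdot\ldots\), chosen so that \(\max_{B_i\cup B_j}\) has arity at most \(2\cdot 6^{k-1}\). Then
\[
  \maxfn{n}(x)=\maxfn{12}(y_1,\dots,y_{12}),\qquad y_i=\max_{B_i}x .
\]
Each quantity fed into the second layer of the \(\maxfn{12}\) network is an affine combination of the \(y_i\) and the \(\max\{y_i,y_j\}=\max_{B_i\cup B_j}x\). Each of these is a maximum of at most \(2\cdot 6^{k-1}\) variables. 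By induction it is computable with \(k\) hidden layers, so a network with \(k-1+1=k\) layers ending in an affine output suffices. These networks are run in parallel, padded with identity via \(t=\relu(t)-\relu(-t)\) where depths differ, and their affine outputs are merged directly into \(T_2\). Applying the remaining \(\relu\) and \(T_3\) then gives \(k+1\) hidden layers for arity \(2\cdot 6^{k}\).

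The base case \(k=1\) is \Cref{thm:max-12}, which gives two hidden layers for \(n\le 12=2\cdot 6\). For \(n\le 2\), one hidden layer comes from \eqref{eq:max_with_relu}. I also have to reconcile this with the formula: when \(n\le 2\) we get \(\ceil{\log_6(n/2)}\le 0\), so the bound reads \(\le 1\). This needs checking for \(n=1\), where the formula gives a negative logarithm whose ceiling is \(0\), so the bound is \(1\), which is fine. Monotonicity in \(n\) follows from duplicating inputs. Finally, for \(2\cdot 6^{k-1}<n\le 2\cdot 6^k\) we have \(\ceil{\log_6(n/2)}=k\), matching the \(k+1\) layers.

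I expect the main obstacle to be verifying the structural claim about the first layer of the \(\maxfn{12}\) construction. The induction needs the first layer's contribution to be exactly affine in the pairwise maxima, rather than ReLUs of arbitrary linear forms. Only then does replacing \(\max\{y_i,y_j\}\) by a deeper max-network stay exact. I would check this directly from the explicit solution in \Cref{sec:bootstrapping}, where the neurons should be of the form \(\relu(x_i-x_j)\) plus linear terms.
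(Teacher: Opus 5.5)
Your argument is correct and rests on the same idea as the paper: the first hidden layer of the \(\maxfn{12}\) network consists only of pairwise maxima, and substituting deeper maximum networks there gains a factor of six in arity per extra hidden layer. This pairwise-max structure holds by construction of the ansatz in \Cref{sec:ansatz}, so the point you flag as the main obstacle is automatic.

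The only difference is that you mirror the induction of \Cref{thm:structured-bootstrap}. You keep \(\maxfn{12}\) as the fixed outer network and plug the inductively built \(\maxfn{2\cdot 6^{k-1}}\) networks into its first layer, whereas the paper plugs copies of the \(\maxfn{12}\) network into the first layer of the inductively built one. This spares you from carrying the pairwise-max structure as an inductive invariant. You also treat \(n\le 2\) explicitly, which the paper's proof of the corollary leaves implicit.
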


\Cref{tab:depth-comparison} compares the arities obtained at small depths by the binary tree~\citep{aroraUnderstandingDeepNeural2018}, the previous base-\(3\) construction~\citep{bakaevBetterNeuralNetwork2026}, and our construction.
\begin{table}[t]
  \centering
  \small
  \begin{tabular}{@{}cccc@{}}
    \toprule
    Hidden layers & \cite{aroraUnderstandingDeepNeural2018} & \cite{bakaevBetterNeuralNetwork2026} & This paper \\
    \midrule
    2 & \(\maxfn{4}\)  & \(\maxfn{5}\)  & \(\maxfn{12}\) \\
    3 & \(\maxfn{8}\)  & \(\maxfn{11}\) & \(\maxfn{72}\) \\
    4 & \(\maxfn{16}\) & \(\maxfn{29}\) & \(\maxfn{432}\) \\
    \bottomrule
  \end{tabular}
  \caption{Maximum arity representable at each depth by the balanced binary-tree construction~\citep{aroraUnderstandingDeepNeural2018}, the previous base-\(3\) construction~\citep{bakaevBetterNeuralNetwork2026}, and our base-\(6\) construction.}
  \label{tab:depth-comparison}
\end{table}
Equivalently, if the number of hidden layers is written as \(c\log n+O(1)\), the leading coefficient is reduced by a factor of \(\log(3)/\log(6)\approx0.613\).

Substituting \(n=d+1\) into the maximum-function bound and applying the Wang-Sun decomposition~\eqref{eq:Wang-Sun} gives the corresponding improvement for arbitrary \(\CPWL\) functions.

\begin{restatable}{corollary}{cpwlcor}
  \label{cor:cpwl}
  For \(d\ge1\), every function in \(\CPWL_d\) is exactly representable by a ReLU network with at most
  \[
    \ceil*{\log_6\! \left(\frac{d+1}{2}\right)} + 1
  \]
  hidden layers. In particular, every function in \(\CPWL_d\) with \(d\le11\) admits an exact representation with two hidden layers.
\end{restatable}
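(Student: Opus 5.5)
We derive \Cref{cor:cpwl} directly from \Cref{cor:base-six} and the Wang-Sun decomposition~\eqref{eq:Wang-Sun}. Fix \(d\ge1\) and \(f\in\CPWL_d\). By~\eqref{eq:Wang-Sun}, write \(f=\sum_{i=1}^s\sigma_i\maxfn{d+1}(A_i(x))\) with \(\sigma_i\in\set{\pm1}\) and affine \(A_i\colon\RR^d\to\RR^{d+1}\). Set \(L\coloneqq\ceil*{\log_6((d+1)/2)}+1\). By \Cref{cor:base-six} with \(n=d+1\), there is a ReLU network \(N\) with at most \(L\) hidden layers computing \(\maxfn{d+1}\).

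First, we make the depth of \(N\) exactly \(L\). If \(N\) has fewer hidden layers, we pad it with identity layers via \(y=\relu(y)-\relu(-y)\). This does not change the computed function.

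Next, we absorb the affine maps. For each \(i\), we precompose \(N\) with \(A_i\), merging \(A_i\) into the first affine map \(T_1\) of \(N\). The result is a network \(N_i\) on \(\RR^d\) with exactly \(L\) hidden layers computing \(\maxfn{d+1}\circ A_i\).

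We then run \(N_1,\dots,N_s\) in parallel. Their weight matrices are stacked block-diagonally in every layer after the first, and the first-layer maps share the common input \(x\). We also merge the final affine maps into a single affine map \(\sum_i\sigma_i T^{(i)}_{L+1}\). The combined network has \(L\) hidden layers and computes \(f\).

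For the \enquote{in particular} claim, take \(d\le11\). Then \((d+1)/2\le6\), so \(\log_6((d+1)/2)\le1\) and \(L\le2\).

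One edge case needs care. For \(d=1\), \(L=\ceil{0}+1=1\), which is still consistent with \Cref{cor:base-six}. For \(d\le5\), the bound is at most two hidden layers, which also follows from \Cref{thm:max-12}.

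There is no real obstacle here. The only points needing care are the depth padding and the parallel composition. Both are standard and preserve the number of hidden layers.
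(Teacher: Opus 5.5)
Your proposal is correct and follows the same route as the paper: the Wang--Sun decomposition~\eqref{eq:Wang-Sun}, the bound of \Cref{cor:base-six} with \(n=d+1\), and parallel evaluation with the signed sum absorbed into the final affine layer. Your identity-padding and block-diagonal stacking spell out what the paper summarizes as ``parallelization preserves depth''; for \(d=1\) you can cite~\eqref{eq:max_with_relu} directly for the one-hidden-layer representation of \(\maxfn{2}\), since the paper's proof of \Cref{cor:base-six} only checks the formula explicitly for \(n\ge3\).
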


\paragraph{Conceptual contribution.}
While \citet{bakaevBetterNeuralNetwork2026} disproved the conjecture of \cite{hertrichLowerBoundsDepth2023} by pushing 2-hidden-layer representability from \(\max_4\) to \(\max_5\) using a clean geometric intuition, our conceptual contribution is that this can be pushed much further, to \(\max_{12}\), by selecting a suitable candidate space, performing symmetry reductions, and applying exact-arithmetic linear algebra computations.

Furthermore, the upper bound of \cite{bakaevBetterNeuralNetwork2026} almost matches the lower bound of \cite{averkovExpressivenessRationalReLU2025} for decimal fractions, both scaling logarithmically with base three. This leaves open whether general rational weights have an advantage over decimal fractions.
The rationality of our construction implies that this is indeed the case, improving the upper bound from base three to base six, and separating those two weight classes already for two hidden layers: for \(10\le n\le12\), rational weights suffice to represent \(\maxfn{n}\) with two hidden layers, whereas decimal-fraction weights do not; see \Cref{cor:weight-separation}.

Finally, at this point, it seems unlikely to us that our bounds are optimal, with computational limits being the main obstacle for scaling our approach.
The computational evidence provided in this paper therefore supports the following conjecture, which is the complementary extremal case compared to the now disproven conjecture by \citet{hertrichLowerBoundsDepth2023}.

\begin{conjecture}
    Every function in \(\CPWL_d\) in any dimension \(d\) can be exactly represented by a ReLU network with two hidden layers.
\end{conjecture}

As discussed above, to prove this, it would suffice to prove it for the $\max$ function.
It is of course conceivable that additional algorithmic insights will make it possible to compute \(\max_n\) with some \(n>12\) in two hidden layers, and every such improvement will consequently lead to improvements in \Cref{cor:base-six} and \Cref{cor:cpwl}. However, computational results of this type would only increase the base of the logarithm further. A genuine advancement at this point would be a sub-logarithmic upper bound or a lower bound of more than two hidden layers, both of which seem out of reach with current techniques and probably require nontrivial new theoretical insights.

\subsection{Idea of the construction}

Our computational search was inspired by the polyhedral interpretation of the \(\maxfn{5}\) construction of \cite{bakaevBetterNeuralNetwork2026}. 
Using the duality between convex, positively homogeneous CPWL functions and polytopes via support functions, the construction derives a formal Minkowski difference from a subdivision of the simplex, and hence a ReLU representation. 

A priori, the search for a representation of $\maxfn{n}$ takes place in an infinite-dimensional function space, and hence, it is not tractable using basic computational methods. 
To cast it into a finite-dimensional problem, we fix a finite family of candidate polytopes as generators for formal Minkowski differerences. 
In \Cref{sec:linear-system}, we describe the general scheme that then, finding coefficients in such a formal Minkowski difference is equivalent to solving a finite linear system. 
Note that we make a specific choice of candidate polytopes for which a solution is not guaranteed even if there may be a solution to the actual problem. 
We aimed for particularly well-structured families of polytopes to make the search space as small as possible as described below. 


For the actual search, we work directly with the equivalent piecewise-linear functions and use rank-\(2\) maxout networks. 
A rank-\(2\) maxout unit~\citep{goodfellowMaxoutNetworks2013} has the form
\[
  x \longmapsto \max\set{L_1(x),L_2(x)},
\]
where \(L_1,L_2\) are affine. A rank-\(2\) maxout network is a neural network in which every hidden unit is a rank-\(2\) maxout unit. The following elementary observation shows that passing to this representation does not effect the number of hidden layers. It follows immediately by applying \eqref{eq:max_with_relu} to each maxout unit.

\begin{lemma}
  \label{lem:maxout-to-relu}
  Every rank-\(2\) maxout network can be converted into a ReLU network with the same number of hidden layers. The conversion increases the width of each hidden layer by at most a factor of three; if the maxout network has rational weights, so does the resulting ReLU network.
\end{lemma}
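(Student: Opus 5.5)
The plan is to replace each maxout unit, one at a time and layer by layer, with three ReLU neurons obtained from \eqref{eq:max_with_relu}. The one subtlety is to make sure the leftover linear term of that identity is absorbed into the \emph{next} affine map. It must not create an extra layer.

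Fix a rank-\(2\) maxout network with hidden layers \(1,\dots,\ell\). In hidden layer \(i\) it has units \(h_{i,j}=\max\set{L_{i,j,1}(z),L_{i,j,2}(z)}\), where \(z\) is the output vector of layer \(i-1\) (or the input when \(i=1\)) and the \(L\)'s are affine. The final output is an affine map \(T\) of the last layer's outputs. Applying \eqref{eq:max_with_relu} with \(x_1=L_{i,j,1}(z)\) and \(x_2=L_{i,j,2}(z)\) gives
\[
  h_{i,j}=\relu(L_{i,j,1}(z))-\relu(-L_{i,j,1}(z))+\relu(L_{i,j,2}(z)-L_{i,j,1}(z)).
\]
So each maxout unit is a fixed linear combination, with coefficients \(1,-1,1\), of three ReLU neurons. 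Each of those neurons has a preactivation that is an affine function of \(z\).

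The ReLU network is built as follows. Layer \(i\) has three neurons for each maxout unit of layer \(i\), with the preactivations displayed above. For \(i\ge2\), the previous maxout outputs \(z\) are not available as neurons. However, each coordinate of \(z\) is the linear combination \((+1,-1,+1)\) of the three ReLU outputs of layer \(i-1\) that replace it. We compose this linear map with the affine preactivation maps. The result is an affine map from the ReLU outputs of layer \(i-1\) to the preactivations of layer \(i\), which is exactly the form a single ReLU layer requires. The output layer is handled the same way: \(T\) is composed with this recombination map. By induction on \(i\), the outputs of the ReLU layer \(i\), recombined linearly, equal the maxout layer-\(i\) outputs on every input. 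The final outputs therefore coincide, and the number of hidden layers is still \(\ell\).

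Each maxout unit became three ReLU neurons, so every hidden width is multiplied by exactly three, which is at most three as claimed. For rationality, the new weights arise only from the old ones through sums, differences, negation, and composition with the recombination matrices, whose entries lie in \(\set{0,\pm1}\). All of these operations preserve \(\QQ\). The only real obstacle is the absorption step, that is, checking that the \(\pm1\) recombination folds into the next affine map rather than adding a layer. That check is the routine composition argument given above.
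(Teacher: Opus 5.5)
Your proof is correct and follows the paper's route: the paper's entire justification is that the lemma ``follows immediately by applying \eqref{eq:max_with_relu} to each maxout unit.'' You add the details the paper leaves implicit, namely folding the \((+1,-1,+1)\) recombination into the next affine map so that no layer is added, and checking that widths triple and rational weights stay rational.
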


While the first ingredient of our approach involved a choice, the following second ingredient is applicable in general to reduce the size of the search space further. 
The key observation is that $\maxfn{n}$ is symmetric in its $n$ arguments, so it suffices to look for permutation-invariant solutions.  
We average each candidate over coordinate permutations, so that both the target and the resulting candidates are \(\symgrp{n}\)-invariant, where \(\symgrp{n}\) denotes the symmetric group on \(n\) elements. Every vector in \(\RR^n\) can be permuted into the cone of sorted vectors
\[
  \rset*{x\in\RR^n}{x_1\le\cdots\le x_n}.
\]
Since both sides of the desired equality are permutation-invariant, it therefore does suffice to establish equality on this cone. There, \(\maxfn{n}(x)=x_n\), and the pairwise maxima in the first hidden layer of our construction simplify to coordinate projections. Expanding the remaining maxima into linear and nonlinear ReLU terms reduces the desired identity to a finite linear system over \(\QQ\): the nonlinear terms must cancel, and the remaining linear term must equal \(x_n\). We solve these systems exactly, obtain rational rank-\(2\) maxout representations, and convert them to ReLU representations of the same depth using \Cref{lem:maxout-to-relu}. The resulting identities are checked independently by an exact-arithmetic verifier.

\section{Symmetry reduction to the sorted cone}
\label{sec:symmetrization}

Permutation symmetry allows us to reduce checking whether two symmetric functions agree on \(\RR^n\) to checking whether they agree on the cone of sorted vectors
\[
  \sortedcone \coloneqq \rset{x \in \RR^n}{x_1 \leq \dots \leq x_n}.
\]
Let the symmetric group \(\symgrp{n}\) act on \(\RR^n\) by permuting coordinates. Explicitly, for \(x \in \RR^n\) and \(\sigma \in \symgrp{n}\), define
\[
  (\sigma x)_i \coloneqq x_{\sigma^{-1}(i)} \qquad \text{for } i \in \set{1, \ldots, n}.
\]
For any function \(f \colon \RR^n\to\RR\), define its \emph{symmetrization} by
\[
  f^{\sym}(x)\coloneqq\frac1{n!}\sum_{\sigma\in\symgrp{n}} f(\sigma x).
\]
We call \(f \colon \RR^n \to \RR\) \emph{symmetric} if \(f(\sigma x)=f(x)\) for every \(x\in\RR^n\) and \(\sigma\in\symgrp{n}\). The symmetrization operator is linear and maps any function to a symmetric function. In particular, \(f\) is symmetric if and only if \(f=f^{\sym}\). In invariant theory, this symmetrization is called the \emph{averaging} (or \emph{Reynolds}) operator associated with the action of \(\symgrp{n}\); see, e.g., \citet{derksen2015computational}.

\begin{proposition}
  \label{prop:symmetric-reduction}
  Let \(g \colon \RR^n\to\RR\) be symmetric, and suppose that
  \(g=\sum_{i=1}^m c_i f_i\) for some functions \(f_i \colon \RR^n \to \RR\) and coefficients \(c_i \in \RR\). Then
  \[
    g=\sum_{i=1}^m c_i f_i^{\sym}.
  \]
  Moreover, if \(f,h \colon \RR^n\to\RR\) are symmetric, then \(f=h\) on \(\RR^n\) if and only if \(f=h\) on \(\sortedcone\).
\end{proposition}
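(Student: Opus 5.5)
The first claim follows from the linearity of the symmetrization operator together with the fact that symmetric functions are fixed by it. I will apply \(\cdot^{\sym}\) to both sides of \(g=\sum_{i=1}^m c_i f_i\). For every \(x\in\RR^n\) and \(\sigma\in\symgrp{n}\), the identity evaluated at \(\sigma x\) reads \(g(\sigma x)=\sum_i c_i f_i(\sigma x)\). Averaging over \(\sigma\in\symgrp{n}\) and exchanging the two finite sums gives \(g^{\sym}(x)=\sum_i c_i f_i^{\sym}(x)\). Since \(g\) is symmetric, \(g(\sigma x)=g(x)\) for every \(\sigma\), so \(g^{\sym}=g\), and the claimed identity follows. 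The only bookkeeping needed is that \(\sigma\mapsto\sigma x\) is a genuine group action, \((\sigma\tau)x=\sigma(\tau x)\). This follows directly from the definition \((\sigma x)_i=x_{\sigma^{-1}(i)}\), but the averaging argument does not actually use it; it only sums over all \(\sigma\).

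For the second claim, the forward direction is trivial because \(\sortedcone\subseteq\RR^n\). For the converse, I will show that every \(\symgrp{n}\)-orbit meets \(\sortedcone\). Given \(x\in\RR^n\), choose a permutation \(\pi\) that sorts the coordinates, so that \(x_{\pi(1)}\le\dots\le x_{\pi(n)}\). Ties can be broken arbitrarily, for instance by index. Set \(\sigma\coloneqq\pi^{-1}\). Then \((\sigma x)_i=x_{\sigma^{-1}(i)}=x_{\pi(i)}\), so \(\sigma x\in\sortedcone\). By symmetry of \(f\) and \(h\), and by their agreement on \(\sortedcone\), we get \(f(x)=f(\sigma x)=h(\sigma x)=h(x)\).

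Neither step presents a real obstacle. The only point requiring care is matching the convention \((\sigma x)_i=x_{\sigma^{-1}(i)}\) with the sorting permutation, that is, taking \(\sigma=\pi^{-1}\) rather than \(\pi\). Since the symmetry hypothesis holds for all of \(\symgrp{n}\), even a mismatch could be repaired by replacing \(\sigma\) with its inverse. The existence of a sorting permutation is elementary; it follows, for example, by induction on \(n\) or by well-ordering the finite multiset of coordinates.
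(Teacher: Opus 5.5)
Your proof is correct and follows the paper's argument essentially verbatim: linearity of symmetrization together with \(g=g^{\sym}\) gives the first claim, and for the second you choose \(\sigma\) with \(\sigma x\in\sortedcone\) and chain \(f(x)=f(\sigma x)=h(\sigma x)=h(x)\). Your explicit choice \(\sigma=\pi^{-1}\) for the sorting permutation only spells out a step the paper leaves implicit.
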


\begin{proof}
  By linearity of symmetrization and symmetry of \(g\),
  \[
    g
    = g^{\sym}
    = \left(\sum_{i=1}^m c_i f_i\right)^{\sym}
    = \sum_{i=1}^m c_i f_i^{\sym},
  \]
  which proves the first claim. For the second claim, only the reverse implication requires proof. Suppose that \(f=h\) on \(\sortedcone\), and let \(x\in\RR^n\). Choose \(\sigma\in\symgrp{n}\) such that \(\sigma x\in\sortedcone\). By symmetry
  \[
    f(x)
    =
    f(\sigma x)
    =
    h(\sigma x)
    =
    h(x).
  \]
  Hence \(f=h\) on \(\RR^n\).
\end{proof}

Since \(\maxfn{n}(x) = x_n\) for every \(x \in \sortedcone\), a symmetric function equals \(\maxfn{n}\) if and only if it equals \(x_n\) on \(\sortedcone\). The cone \(\sortedcone\) is the closure of a chamber of the braid arrangement \(\rset{x_i=x_j}{1\leq i<j\leq n}\). This braid arrangement has been previously used to study exact representations of the maximum function by ReLU networks; see~\cite{grilloDepthBoundsNeuralNetworks2025}.

\section{Exact search for two-hidden-layer identities}
\label{sec:computations}

We now turn the symmetry reduction into a finite exact search. We first specify a restricted family of two-hidden-layer rank-\(2\) maxout blocks whose first hidden layer consists only of pairwise maxima. We then symmetrize these blocks, restrict them to the sorted cone, and express the condition that a rational linear combination equals \(\maxfn{n}\) as a finite linear system over \(\QQ\).

\subsection{The two-hidden-layer ansatz}
\label{sec:ansatz}
Let $\mathcal{E}_n
  \coloneqq
  \rset{(i,j)}{1\le i\le j\le n}
$ denote the unordered pairs of indices with repetition, including the diagonal pairs, and define
\[
  m_{ij}(x)\coloneqq\max\set{x_i,x_j} \quad \text{ for } x \in \RR^n .
\]
For \(k\in\NN\), let \(\mathcal{M}_{n,k}\) be the set of multisets of cardinality \(k\) with elements in \(\mathcal{E}_n\). For \(A,B\in\mathcal{M}_{n,k}\), define
\[
  \Phi_{A,B}(x)
  \coloneqq
  \max\set*{
    \sum_{(i,j)\in A}m_{ij}(x),
    \sum_{(i,j)\in B}m_{ij}(x)
  },
\]
where, since both \(A\) and \(B\) have cardinality \(k\), each sum consists of exactly \(k\) terms. This is a two-hidden-layer rank-\(2\) maxout block: the first hidden layer computes pairwise maxima, and the second computes the maximum of two sums of~\(k\) first-layer outputs. By \Cref{lem:maxout-to-relu}, every such block has a two-hidden-layer ReLU realization with rational weights. The parameter \(k\) controls the complexity of the candidate blocks and the size of the resulting search space. To impose permutation symmetry, define
\[
  F_{A,B}(x)
  \coloneqq
  \sum_{\sigma\in\symgrp{n}}\Phi_{A,B}(\sigma x)
  =
  n!\,\Phi_{A,B}^{\sym}(x).
\]
The factor \(1/n!\) from the symmetrization operator is omitted because it only rescales the coefficients of a representation. We consider the vector space spanned by the finitely many \(F_{A, B}\)
\[
  V_{n,k}
  \coloneqq
  \Span\rset*{F_{A,B}}{A,B\in\mathcal{M}_{n,k}}.
\]
Equivalently, \(V_{n,k}\) is the space of all functions that can be represented with a ReLU network using the prescribed first two hidden layers, by choosing output layer weights. Our goal is therefore to find a certificate that \(\maxfn{n}\in V_{n,k}\).
The spaces are nested, \(V_{n,k}\subseteq V_{n,k+1}\), as shown in \Cref{lem:span-nested}. Moreover, \(\maxfn{n} \notin V_{n,k}\) for every \(k<\floor*{(n-1)/2}\) by the polyhedral dimension obstruction in \Cref{cor:min-k}.

\subsection{Reduction to an exact linear system}
\label{sec:exact-linear-system}

By \Cref{prop:symmetric-reduction}, it suffices to establish the desired identity on the sorted cone \(\sortedcone\).
For \(x \in \sortedcone\), the coordinates are ordered, so the maximum of two coordinates is the one with the larger index. Hence
\[
  m_{ij}(\sigma x)
  =
  x_{\max\set{\sigma^{-1}(i),\sigma^{-1}(j)}}.
\]
Accordingly, for \(A\in\mathcal{M}_{n,k}\) and \(\sigma\in\symgrp{n}\) define the linear form
\[
  \ell_{\sigma,A}(x)
  \coloneqq
  \sum_{(i,j)\in A}
  x_{\max\set{\sigma^{-1}(i),\sigma^{-1}(j)}}.
\]
Then, for \(x\in\sortedcone\),
\[
  F_{A,B}(x)
  =
  \sum_{\sigma\in\symgrp{n}}
  \max\set{\ell_{\sigma,A}(x),\ell_{\sigma,B}(x)}.
\]

Using
\[
  \max\set{u,v}
  =
  u+\relu(v-u),
\]
and grouping equal difference vectors, we can write
\[
  F_{A,B}(x)
  =
  L_{A,B}(x)
  +
  \sum_{d\in D_{A,B}}
  c_{A,B,d}\,
  \relu(d^\top x),
\]
where \(L_{A,B}\) is linear, \(D_{A,B}\subset\ZZ^n\) is finite, and \(c_{A,B,d}\in\ZZ\). We search for rational coefficients \(\lambda_{A,B}\) satisfying
\begin{align}
  \sum_{A,B}\lambda_{A,B}c_{A,B,d}
  &=0
  &&\text{for every }d\in\bigcup_{A,B}D_{A,B},
  \label{eq:nonlinear-cancel}\\
  \sum_{A,B}\lambda_{A,B}L_{A,B}(x)
  &=x_n.
  \label{eq:linear-xn}
\end{align}
The equations in \eqref{eq:nonlinear-cancel} cancel the nonlinear ReLU terms, while \eqref{eq:linear-xn} requires the remaining linear part to equal \(x_n=\maxfn{n}(x)\) on \(\sortedcone\). Thus the search reduces to a finite linear system over \(\QQ\). Any rational solution \((\lambda_{A,B})\) gives an exact identity
\[
  \maxfn{n}
  =
  \sum_{A,B}\lambda_{A,B}F_{A,B}
\]
on \(\sortedcone\), and hence on all of \(\RR^n\) by permutation symmetry. Since the blocks \(F_{A,B}\) have rank-\(2\) maxout realizations with rational weights and the coefficients \(\lambda_{A,B}\) are rational, the resulting network also has rational weights. Applying \Cref{lem:maxout-to-relu} then gives a ReLU network of the same depth with rational weights.

\begin{remark}\label{rem:linear-deps}
  Before assembling the system, we simplify the ReLU terms using elementary identities valid on \(\sortedcone\). If \(d^\top x\) has fixed sign on \(\sortedcone\), then \(\relu(d^\top x)\) is either identically zero or linear on the cone and can therefore be absorbed into the linear part. Opposite directions are related by
  \[
    \relu(-d^\top x)
    =
    \relu(d^\top x)-d^\top x,
  \]
  and positive homogeneity gives
  \[
    \relu(q\,d^\top x)
    =
    q\,\relu(d^\top x),
    \qquad q>0.
  \]
  We use these identities to merge equivalent nonlinear terms, adjusting their coefficients and the linear part accordingly.
\end{remark}

\subsection{Enumeration modulo symmetries}

The candidate pairs \((A,B)\) have two immediate redundancies. First, \(\Phi_{A,B}=\Phi_{B,A}\).  so \(F_{A,B}=F_{B,A}\). Second, simultaneous relabelling of all indices merely permutes the summands in the orbit sum. Hence
\[
  F_{A,B}
  =
  F_{B,A}
  =
  F_{\tau A,\tau B}
  =
  F_{\tau B,\tau A}
  \qquad
  \text{for every }\tau\in\symgrp{n},
\]
where \(\tau A\) is obtained by applying \(\tau\) to every index and sorting the two entries of each resulting pair.

Let \(C_2\cong\ZZ/2\ZZ=\set{0,1}\) act by exchanging the two multisets, and define the action of \(\symgrp{n}\times C_2\) by
\[
  (\tau,\varepsilon)\cdot(A,B)
  \coloneqq
  \begin{cases}
    (\tau A,\tau B), & \varepsilon=0,\\
    (\tau B,\tau A), & \varepsilon=1.
  \end{cases}
\]
Since \(F_{A,B}\) is constant on every orbit, only one representative from each orbit is needed in the linear system.

Equivalently, \(A\) and \(B\) may be viewed as the two edge colors of a multigraph on \(\set{1,\ldots,n}\), with loops and edge multiplicities allowed. Permuting the coordinates relabels the vertices, while exchanging \(A\) and \(B\) exchanges the two colors. Orbit enumeration therefore reduces to a colored-multigraph isomorphism problem with interchangeable colors. We enumerate these equivalence classes using \emph{nauty}~\citep{mckayPracticalGraphIsomorphism2014}.

\begin{table}[t]
  \centering
  \begin{tabular}{@{}c l l@{}}
    \toprule
    \(i\) & \(A_i\)                & \(B_i\)                \\
    \midrule
    1     & \(\mset{(1,2),(1,2)}\) & \(\mset{(3,4),(3,4)}\) \\
    2     & \(\mset{(1,2),(1,3)}\) & \(\mset{(1,4),(5,6)}\) \\
    3     & \(\mset{(1,2),(1,3)}\) & \(\mset{(4,5),(4,6)}\) \\
    4     & \(\mset{(1,2),(3,4)}\) & \(\mset{(1,3),(5,6)}\) \\
    \bottomrule
  \end{tabular}
  \caption{The four orbit representatives used in the certificate for \(\maxfn{6}\).}
  \label{tab:certificate-six}
\end{table}

\subsection{Computational results}
\maxtwelvethm*

\begin{proof}
  The cases \(n=3,4\) follow from balanced binary-tree rank-\(2\) maxout representations. For each \(5\le n\le12\), we use \(k=\floor*{(n-1)/2}\), construct the system described above, apply the reductions from \Cref{rem:linear-deps}, and solve it exactly over \(\QQ\). The system is solvable, and the coefficients \(\lambda_{A,B}\) of the resulting solution satisfy \eqref{eq:nonlinear-cancel} and \eqref{eq:linear-xn}, and hence yield
  \[
    \maxfn{n}
    =
    \sum_{A,B}\lambda_{A,B}F_{A,B}
  \]
  on \(\sortedcone\). Since both sides are permutation-invariant, \Cref{prop:symmetric-reduction} extends the identity to all of \(\RR^n\).

  Each \(F_{A,B}\) is realized by evaluating two-hidden-layer rank-\(2\) maxout blocks in parallel and combining their outputs linearly. All coefficients appearing inside the blocks are integral, and the coefficients \(\lambda_{A,B}\) are rational, so the resulting maxout network has rational weights. By \Cref{lem:maxout-to-relu}, it converts to a ReLU network with the same number of hidden layers and rational weights. The resulting identities are checked independently by the exact-arithmetic verifier.

  Finally, \cite[Proposition~2.2]{mukherjeeLowerBoundsBoolean2017} proves that \(\max\set{0,x_1,x_2}\) has no one-hidden-layer ReLU representation. Restricting \(\maxfn{n}\) to inputs of the form \((x_1,x_2,0,\ldots,0)\) transfers this lower bound to every \(n\ge3\), proving optimality.
\end{proof}

\Cref{tab:computational-statistics} summarizes the search for \(n\le12\). Here \(N_{\mathrm{rep}}\) denotes the number of \((A,B)\)-orbits retained after the \(\symgrp{n}\times C_2\) symmetry reduction, \(D\) is the set of direction vectors indexing nonlinear ReLU terms after the reductions in \Cref{rem:linear-deps}, and the support size is the number of nonzero coefficients \(\lambda_{A,B}\) in the reported certificate.

\begin{remark}
  In our computations, we observe that the ansatz can be restricted considerably further while the resulting systems of linear equations remain solvable. In particular, it suffices to exclude diagonal pairs \((i, i)\) and to take \(A\) and \(B\) to be sets of cardinality \(k\) rather than multisets, so that no pairs occurs with multiplicity. Moreover, we can restrict to pairs \((A, B)\) such that \(A \cap B = \varnothing\) and such that, when \(A\) and \(B\) are viewed separately as edge sets of graphs on \(\set{1, \dots, n}\), both graphs are forests, i.e., acyclic.
\end{remark}

\begin{table}[t]
  \centering
  \begin{tabular}{
      cc
      S[table-format=8.0]
      S[table-format=8.0]
      S[table-format=7.0]
      S[table-format=5.0]
    }
    \toprule
    \(n\) & \(k\) &
    {\(\abs{\mathcal{M}_{n,k}}\)} &
    {\(N_{\mathrm{rep}}\)} &
    {\(\abs{D}\)} &
    {Support size} \\
    \midrule
    5  & 2 &    120 &    131 &    15 &   3 \\
    6  & 2 &    231 &    144 &    35 &   4 \\
    7  & 3 &   4060 &   4469 &   630 &  57 \\
    8  & 3 &   8436 &   4716 &  1428 &  69 \\
    9  & 4 & 194580 & 210540 & 20685 & 337 \\
    10 & 4 & 424270 & 216428 & 47157 & 402 \\
    11 & 5 & 12103014 & 12179657 & 657822 & 39042 \\
    12 & 5 & 27285336 & 12337637 & 1504932 & 16536 \\
    \bottomrule
  \end{tabular}
  \caption{Computational statistics for \(n\le12\). Here \(\abs{\mathcal{M}_{n,k}}\) is the number of individual size-\(k\) multisets, \(N_{\mathrm{rep}}\) is the number of orbit representatives of pairs \((A,B)\), and \(\abs{D}\) is the number of nonlinear direction vectors remaining after the reductions in \Cref{rem:linear-deps}.
  }
  \label{tab:computational-statistics}
\end{table}

\begin{corollary}\label{cor:weight-separation}
  For \(10\le n\le12\), \(\maxfn{n}\) admits a two-hidden-layer ReLU representation with rational weights, whereas no two-hidden-layer ReLU representation with decimal-fraction weights exists.
\end{corollary}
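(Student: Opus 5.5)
The statement has two parts, and I would treat them separately.

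The positive half is immediate from \Cref{thm:max-12}. For \(n\le 12\) it gives a two-hidden-layer ReLU representation of \(\maxfn{n}\) with rational weights. This comes from the rational solution \((\lambda_{A,B})\) of the system \eqref{eq:nonlinear-cancel}--\eqref{eq:linear-xn}, converted via \Cref{lem:maxout-to-relu}. Nothing further is needed there.

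The negative half I would reduce to the lower bound of \citet{averkovExpressivenessRationalReLU2025}. That result says representing \(\max\set{0,x_1,\ldots,x_m}\) with decimal-fraction weights requires at least \(\ceil*{\log_3(m+1)}\) hidden layers. Suppose \(\maxfn{n}\) has a two-hidden-layer representation with decimal-fraction weights. Substituting \(x_n=0\) precomposes the first affine map with the coordinate embedding \(\RR^{n-1}\to\RR^n\). This only deletes columns of the first weight matrix, so all weights stay decimal fractions and the depth is unchanged. We obtain a decimal-fraction two-hidden-layer network for \(\max\set{0,x_1,\ldots,x_{n-1}}\), so \(m=n-1\).

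The bound then needs \(\ceil*{\log_3 n}\ge 3\), i.e.\ \(n>9\). For \(n=10,11,12\) this gives \(\log_3 n>2\), hence at least three hidden layers, a contradiction. This explains exactly why the range starts at \(10\): for \(n\le 9\) the \(\log_3\) bound only forces two layers.

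The main step to get right is checking that the cited lower bound is stated in the form I use. It must hold for arbitrary width, and it must allow decimal fractions in every layer, biases included, not only integral weights in the first layer. If the result is instead phrased for \(\maxfn{m+1}\) directly or with a slightly different convention, I would adjust the substitution accordingly. For example, if it is already stated for \(\maxfn{n}\) with the bound \(\ceil*{\log_3 n}\), no restriction step is needed. I would also confirm that fixing an input coordinate to \(0\) cannot introduce non-decimal weights, which is clear because the remaining weights are untouched and biases are unchanged.
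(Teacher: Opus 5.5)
Your proposal is correct and matches the paper's proof. The rational half comes from \Cref{thm:max-12}, and the decimal half sets one input to zero, giving a decimal-fraction two-hidden-layer network for \(\max\set{0,x_1,\ldots,x_{n-1}}\); for \(10\le n\le 12\) this contradicts the \(\ceil*{\log_3 n}=3\) lower bound of \citet{averkovExpressivenessRationalReLU2025}, and your check that this substitution only deletes columns of the first weight matrix is a detail the paper leaves implicit.
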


\begin{proof}
  The rational representation follows from \Cref{thm:max-12}. If a two-hidden-layer network with decimal-fraction weights represented \(\maxfn{n}\), then setting one input identically to zero would yield a two-hidden-layer representation of
  \[
    \max\set{0,x_1,\ldots,x_{n-1}}
  \]
  with decimal-fraction weights. For \(10\le n\le12\), this contradicts the lower bound \(\ceil*{\log_3 n}=3\) of \cite{averkovExpressivenessRationalReLU2025}.
\end{proof}

For \(n=6\) and \(k=2\), \Cref{tab:certificate-six} lists four pairs of multisets that give the certificate
\[
  \max\set{x_1,\ldots,x_6}
  =
  \frac{1}{720}F_{A_1,B_1}
  +\frac{1}{360}F_{A_2,B_2}
  -\frac{1}{1440}F_{A_3,B_3}
  -\frac{1}{360}F_{A_4,B_4},
\]
where \(F_{A,B}\) is the symmetrized block defined in \Cref{sec:ansatz}. The certificates for \(n=7,\ldots,12\) are substantially larger and are therefore not reproduced in the body of the paper.
\ificlrfinal
They are provided with the accompanying online material\footnote{\url{https://github.com/kilianar/max-relu-certificates}}.
\else
They are provided with the anonymized supplementary material.
\fi

The search is designed to optimize depth rather than width. The direct realization obtained from the symmetrized identities can have factorial second-layer width; \Cref{sec:construction-width} discusses this width and compares it with the known quadratic lower bound~\citep{safranDepthHierarchyComputing2026}.

\section{Structured substitution and the base-six hidden-layer bound}
\label{sec:bootstrapping}

To obtain representations of \(\maxfn{n}\) for larger \(n\), any \(D\)-hidden-layer representation of \(\maxfn{l}\) can be composed in a tree-like fashion: applying copies of the representation to groups of at most \(l\) inputs increases the arity by a factor of \(l\) at the cost of \(D\) additional hidden layers.

For the specific two-hidden-layer representation constructed in this paper, the additional structure of its first hidden layer allows a more efficient composition. Exploiting this structure, each step increases the arity by a factor of \(r \coloneqq\floor*{l/2}\) while requiring only one additional hidden layer.

\begin{theorem}
  \label{thm:structured-bootstrap}
  Let \(\mathcal{N}_l\) be a two-hidden-layer rank-\(2\) maxout network representing \(\maxfn{l}\), whose first hidden layer consists only of pairwise maxima \(\max\set{x_i,x_j}\), and let \(r\coloneqq\floor*{l/2}\ge2\).
  Then, for every integer \(s\ge0\), the function \(\maxfn{n}\) for \(n=lr^s\) has an exact rank-\(2\) maxout representation with \(2+s\) hidden layers whose first hidden layer again consists only of pairwise maxima. Consequently, for every \(n\ge1\), \(\maxfn{n}\) has an exact ReLU representation with at most
  \[
    2+\max\set*{0,\ceil*{\log_r\frac nl}}
  \]
  hidden layers. If \(\mathcal N_l\) has rational weights, then all of these representations can be chosen with rational weights.
\end{theorem}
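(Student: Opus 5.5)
The plan is to argue by induction on \(s\). The key observation is that a pairwise maximum of group maxima is itself a maximum over the union of the two groups. Write \(n_s\coloneqq l r^s\). For \(s=0\) the network \(\mathcal N_l\) itself works. For the inductive step, assume \(\mathcal N_{n_{s-1}}\) is a rank-\(2\) maxout network with \(1+s\) hidden layers that represents \(\maxfn{n_{s-1}}\) and whose first hidden layer consists only of pairwise maxima. We construct \(\mathcal N_{n_s}\) as follows.

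Partition the \(n_s\) inputs into \(l\) groups \(G_1,\dots,G_l\), each of size \(r^s\), and put \(y_i\coloneqq\max_{t\in G_i}x_t\). Then \(\maxfn{n_s}(x)=\maxfn{l}(y)=\mathcal N_l(y)\). Every first-layer unit of \(\mathcal N_l\) evaluated at \(y\) has the form
\[
  \max\set{y_i,y_j}=\max_{t\in G_i\cup G_j}x_t ,
\]
which is a maximum of at most \(2r^s\) inputs. Since \(r=\floor{l/2}\), we have \(2r\le l\), hence
\[
  2r^s=2r\cdot r^{s-1}\le l r^{s-1}=n_{s-1}.
\]
So we can compute each such unit with a copy of \(\mathcal N_{n_{s-1}}\), filling the surplus argument slots with duplicated coordinates from \(G_i\cup G_j\). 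Duplication does not change the maximum. It also keeps the first layer of each copy of the form \(\max\set{x_a,x_b}\), where \(a=b\) is allowed and the unit is then simply a coordinate.

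Next we wire the copies into \(\mathcal N_l\). We run all these copies in parallel; together they use \(1+s\) hidden layers. Each copy ends in an affine output map, and the second hidden layer of \(\mathcal N_l\) applies maxout units to affine functions of the first-layer values. We therefore compose the copies' output maps with those affine functions, so that no extra layer is needed. The resulting second-layer units of \(\mathcal N_l\) form the \((2+s)\)-th hidden layer, and the output map of \(\mathcal N_l\) is kept unchanged. This gives a representation of \(\maxfn{n_s}\) with \(2+s\) hidden layers whose first hidden layer is the union of the first layers of the copies, and hence consists only of pairwise maxima. All weights are products and sums of weights of \(\mathcal N_l\) and \(\mathcal N_{n_{s-1}}\), so rationality is preserved by induction.

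For general \(n\), take \(s\coloneqq\max\set{0,\ceil{\log_r(n/l)}}\). This is the least \(s\ge0\) with \(lr^s\ge n\). Duplicate inputs to feed \(\maxfn{n}\) into \(\maxfn{lr^s}\), and convert the result with \Cref{lem:maxout-to-relu}, which preserves depth and rationality.

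The step I expect to need the most care is the layer bookkeeping when the inner networks' affine outputs are absorbed into the next layer. Everything else is routine, and the essential content is the inequality \(2r^s\le lr^{s-1}\).
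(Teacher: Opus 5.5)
Your proof is correct. It rests on the same key idea as the paper's proof: a pairwise maximum of block maxima is a maximum over the union of the blocks, so each first-layer unit can be replaced by a max-network whose affine output is absorbed into the next maxout layer. However, you run the induction in the mirrored direction.

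\emph{The paper's orientation.} It keeps the inductively built network for \(\maxfn{lr^s}\) on the outside. It partitions the \(lr^{s+1}\) inputs into \(lr^s\) blocks of size \(r\) and substitutes copies of the fixed \(\mathcal{N}_l\) into the first layer, so it needs only \(2r\le l\).

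\emph{Your orientation.} You keep \(\mathcal{N}_l\) on the outside, use \(l\) groups of size \(r^s\), and substitute copies of the inductively built network into its first layer. Your condition \(2r^s\le lr^{s-1}\) is the same inequality.

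The real difference is where the pairwise-max structure is used. In the paper's orientation, the pairwise-max invariant of the inductive network is what makes the next step possible at all. In yours, the only pairwise-max structure actually used is the hypothesis on \(\mathcal{N}_l\). The invariant for the inner copies is carried along only to deliver the structural clause of the theorem.

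Consequently, your argument, run with groups of size \(\floor{m/2}\), shows something slightly more general. Any \(D\)-hidden-layer rank-\(2\) maxout network for \(\maxfn{m}\) yields a \((D+1)\)-hidden-layer one for \(\maxfn{l\floor{m/2}}\). For odd \(l\) this grows a bit faster than \(lr^s\); for example, with \(l=5\) it gives \(\maxfn{25}\) instead of \(\maxfn{20}\) at four hidden layers.

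The paper's orientation, in turn, inserts only the one fixed gadget \(\mathcal{N}_l\) at each step, which makes the per-layer growth factor \(r\) transparent. For even \(l\) such as \(l=12\), the two constructions unroll to essentially the same hierarchical network.
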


\begin{proof}
  We induct on \(s\), maintaining the structure of the first hidden layer. For \(s=0\), the required representation is \(\mathcal N_l\).

  Suppose the claim holds for some \(s\ge0\), and set \(M\coloneqq lr^s\).  Partition \(rM\) input coordinates into \(M\) disjoint blocks \(S_1,\ldots,S_M\), each of size \(r\), and write
  \[
    y_p\coloneqq\max_{i\in S_p}x_i.
  \]
  Conceptually, evaluating the representation of \(\maxfn{M}\) at
  \((y_1,\ldots,y_M)\) gives
  \[
    \max_{1\le p\le M}y_p
    =
    \max_{1\le i\le rM}x_i.
  \]
  By the induction hypothesis, every unit in the first hidden layer of this representation is a pairwise maximum. After the substitution \(x_p\mapsto y_p\), such a unit becomes
  \[
    \max\set{y_p,y_q}
    =
    \max_{i\in S_p\cup S_q}x_i.
  \]
  This is a maximum of at most \(2r\le l\) original input coordinates. Hence it can be computed by a copy of \(\mathcal N_l\), with repeated-coordinate padding if fewer than \(l\) distinct inputs occur.

  In the network \(\mathcal{N}_M\) representing \(\max_M\), replace every first layer unit in parallel by a copy of \(\mathcal N_l\). The output affine maps of these copies can be absorbed into the affine map feeding the next maxout layer. Thus one hidden layer is replaced by two, increasing the total depth from \(2+s\) to \(3+s\). The resulting network represents
  \[
    \maxfn{rM}
    =
    \maxfn{lr^{s+1}}.
  \]
  Moreover, its first hidden layer consists of the pairwise-max units from the inserted copies of \(\mathcal N_l\), so the induction invariant is preserved.

  For arbitrary \(n\ge1\), set
  \[
    s
    \coloneqq
    \max\set*{0,\ceil*{\log_r\frac nl}}.
  \]
  Then \(lr^s\ge n\). Feeding the \(n\) input coordinates into the \(lr^s\) inputs and repeating coordinates as necessary gives a representation of \(\maxfn{n}\) with the same number of hidden layers. Finally, \Cref{lem:maxout-to-relu} converts the resulting rank-\(2\) maxout network into a ReLU network without increasing its depth. If the weights of \(\mathcal N_l\) are rational, the substitutions, affine-map absorptions, repeated-coordinate padding, and maxout-to-ReLU conversion all preserve rationality.
\end{proof}

\begin{figure}
  \centering
  \resizebox{0.95\linewidth}{!}{%
    \input{figures/bootstrap}
  }
  \caption{Structured substitution: replacing each comparison of two block maxima by \(\mathcal N_l\) transforms a depth-\(2\) representation of \(\max_{lr}\) into a depth-\(3\) representation of \(\max_{lr^{2}}\).}
  \label{fig:bootstrap}
\end{figure}

\bootstrapcor*

\begin{proof}
  Apply \Cref{thm:structured-bootstrap} to the two-hidden-layer rank-\(2\) maxout representation of \(\maxfn{12}\) from \Cref{thm:max-12}. Here \(l=12\) and \(r=6\), so for every \(n\ge3\) the theorem gives
  \[
    2+\max\set*{0,\ceil*{\log_6\!\left(\frac{n}{12}\right)}}
    =
    2+\ceil*{\log_6\!\left(\frac{n}{12}\right)}
    =
    \ceil*{\log_6\!\left(\frac n2\right)}+1.
  \]
  The first equality holds because \(n\ge3\) implies
  \(\ceil*{\log_6(n/12)}\ge0\).
\end{proof}

\cpwlcor*

\begin{proof}
  By the Wang-Sun decomposition~\eqref{eq:Wang-Sun}, each \(f\in\CPWL_d\) is a signed sum of ReLU networks for \(\maxfn{d+1}\) composed with affine maps; parallelization and summation in the final affine layer preserve depth. Applying the bound for \(\maxfn{d+1}\) from  \Cref{cor:base-six} shows that \(f\) admits an exact ReLU representation with at most \(\ceil*{\log_6\!\left(\frac{d+1}{2}\right)}+1\) hidden layers.
\end{proof}

\section{Conclusion and Open Problems}
The two-hidden-layer representation of \(\maxfn{5}\) showed that the binary-tree depth bound is not optimal, but it remained unclear whether this was a low-arity exception. Our results rule this out: two hidden layers suffice for \(\maxfn{n}\) for every \(n\leq 12\), and hence for every \(\CPWL\) function on \(\RR^d\) with \(d\leq 11\). Thus, the range in which arbitrary \(\CPWL\) functions admit exact two-hidden-layer ReLU representations is considerably larger than previously known. Whether two hidden layers suffice in all dimensions remains open.

The main computational step is to replace a direct nonlinear search over network parameters by exact linear algebra. Within a restricted rank-\(2\) maxout ansatz, permutation symmetry and restriction to the sorted cone reduce the representation problem to a rational linear system. This makes larger arities accessible, but the system size grows rapidly. In particular, \(n=13\) remains unresolved because the corresponding computation exceeds our available resources, not because the ansatz is known to fail. There is currently no indication that the solutions stop at \(n=12\).

At the same time, the ansatz is highly restrictive. If there is no solution for some arity, this would exclude only this particular construction, not a general two-hidden-layer ReLU representation of \(\maxfn{n}\). A more useful next step is therefore to understand the structure behind the observed solutions. A structural description of the corresponding signed Minkowski relations could replace the arity-by-arity linear systems by a general argument and thereby extend the construction beyond the computational range.

Symmetrization is crucial for the present computation, but it also produces very large second hidden layers and may obscure the structure of the identities. Sparse or nonsymmetric solutions could reduce this width and, more importantly, reveal patterns hidden by full symmetrization.

Finally, the pairwise-max structure of our constructions yields the substitution in \Cref{thm:structured-bootstrap}, improving the general recursion from base \(3\) to base \(6\). Larger fixed-arity solutions of the same form can improve this base further, but they still give only \(\Theta(\log n)\) depth. Any \(o(\log n)\) bound therefore requires a stronger recursion whose gain grows with arity, or a different structural principle relating solutions across arities.

\paragraph{Reproducibility Statement.}
All computer-assisted claims in this work can be independently verified using the artifacts provided in the supplementary material. For \(5\leq n\leq12\), we provide the certificates used in \cref{sec:computations} together with a standalone Python verifier that checks the relevant identities using exact rational arithmetic. The verifier is independent of the more complex search code: verifying the claims requires only the reported certificates and the verifier, rather than reproducing their discovery. The supplementary material provides the software requirements and instructions for running the verification.

\section*{AI Use Statement}
We used large language models (LLMs) as auxiliary tools during the research and preparation of this work. LLMs were used to assist with writing, editing, and optimizing code, primarily to enable rapid exploration of ideas. We also experimented with using LLMs to identify patterns or derive further insights from the solutions found through our computational search; these attempts did not yield any insights or results incorporated into this work. Finally, LLMs were used to proofread the manuscript and provide suggestions for improving its presentation and readability.

The research conception, mathematical claims, and conclusions reported in this paper were developed by the authors and not generated by LLMs. The authors reviewed the AI-assisted material and take full responsibility for the final content of the paper.

\ificlrfinal
\subsubsection*{Acknowledgments}
The authors thank the members of the Neural Polytopes Zulip community and the participants of the Workshop on Polyhedral Geometry for Neural Networks\footnote{\url{https://neuralpolytopes.gitlab.io/workshop2026/}}, held in Nuremberg in March 2026, for inspiring exchange.

\fi

\bibliography{references}
\bibliographystyle{iclr2027/iclr2027_conference}

\clearpage
\appendix
\section{Signed Minkowski identities via linear algebra}
\label{sec:linear-system}

This section gives the polyhedral origin of our search method. The construction of \cite{bakaevBetterNeuralNetwork2026} subdivides the standard simplex and turns the cells into a signed Minkowski identity for \(\maxfn{5}\). We first used exact linear algebra to recover that identity from the cells and their intersections. The same observation then became our search principle: once a family of candidate polytopes is fixed, the unknown signed coefficients satisfy a finite linear system. The main implementation works with the equivalent support-function equations on the sorted cone.

A \emph{polytope} is the convex hull of finitely many points. The support function of a polytope \(P\subset\RR^n\) is
\[
  h_P(u)\coloneqq\max_{p\in P} u^\top p.
\]
Thus, if
\[
  f(x)=\max_{i \in \set{1, \dots, m}} a_i^\top x,
\]
then \(f=h_P\) for the Newton polytope
\(
  P\coloneqq\conv\set{a_1,\ldots,a_m}.
\)
Conversely, the support function of every polytope is a convex, positively homogeneous \(\CPWL\) function. Under this correspondence, addition of support functions corresponds to Minkowski addition, pointwise maxima correspond to convex hulls, and multiplication by a positive scalar corresponds to dilation. Signed linear combinations of support functions are therefore naturally interpreted as formal Minkowski differences.

In particular, \(\maxfn{n}\) is the support function of the standard simplex \(\conv\set{e_1,\ldots,e_n}\), and \(\max\set{x_i,x_j}\) is the support function of the segment \(\segment{e_i}{e_j}\). Hence, the sums of pairwise maxima and the maxima of such sums used in the main text correspond, respectively, to Minkowski sums and convex hulls.

In \cite{bakaevBetterNeuralNetwork2026}, a structured subdivision of the simplex yields a weighted Minkowski difference of polytopes whose support functions have two-hidden-layer representations. Their signed sum is the support function of the simplex and hence equals the maximum function.

Our search begins directly with candidate polytopes \(Q_1,\ldots,Q_r\) whose support functions have two-hidden-layer representations. We seek coefficients \(c_i\in\RR\) such that the formal Minkowski identity
\[
  P = \sum_{i=1}^r c_i Q_i
\]
holds. Some coefficients may be negative. By definition, the identity means that
\[
  P + \sum_{c_i<0} (-c_i)Q_i
  =
  \sum_{c_i > 0} c_i Q_i
\]
as an equality of ordinary Minkowski sums. Equivalently, the support functions satisfy
\[
  h_P=\sum_{i=1}^r c_i h_{Q_i}.
\]

Since the support functions are piecewise linear, we may choose a complete polyhedral fan on whose cones they are all linear. On each cone, their equality is determined by their values on any finite conic generating set. This reduces the desired identity to a finite system of linear equations.

A \emph{complete polyhedral fan} in \(\RR^n\) is a finite collection of polyhedral cones with apex at the origin, closed under taking faces, such that the intersection of any two cones is a face of each and the cones cover \(\RR^n\).

\begin{proposition}
  \label{prop:finite-linear-system}
  Let \(\Sigma\) be a complete polyhedral fan, and let \(P,Q_1,\dots,Q_r\subset\RR^n\) be polytopes whose support functions are linear on every cone of \(\Sigma\). For each \(\tau\in\Sigma\), choose a finite conic generating set \(U_\tau\subset\tau\), so that
  \[
    \tau=\operatorname{cone}(U_\tau).
  \]
  Set
  \[
    U\coloneqq\bigcup_{\tau\in\Sigma}U_\tau,
  \]
  and define
  \[
    A_{u,i}\coloneqq h_{Q_i}(u),
    \qquad
    b_u\coloneqq h_P(u)
    \qquad(u\in U, \; i \in \set{1, \dots, r}).
  \]
  Then a vector \(c=(c_1,\ldots,c_r)\) satisfies
  \[
    h_P=\sum_{i=1}^r c_i h_{Q_i}
  \]
  if and only if \(Ac=b\).
\end{proposition}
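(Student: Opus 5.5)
Both directions reduce to evaluating the identity on the finite set \(U\). Write \(g \coloneqq h_P-\sum_{i=1}^r c_i h_{Q_i}\). Then \((Ac-b)_u = -g(u)\) for every \(u\in U\), so \(Ac=b\) holds if and only if \(g\) vanishes on \(U\). The statement is therefore equivalent to: \(g\equiv 0\) on \(\RR^n\) if and only if \(g\) vanishes on \(U\).

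The forward direction is immediate. If \(h_P=\sum_i c_ih_{Q_i}\) everywhere, then in particular equality holds at every \(u\in U\), which says exactly that \(Ac=b\).

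For the reverse direction, I will use the hypothesis on \(\Sigma\) in two steps.

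\begin{enumerate}
  \item \emph{Linearity on each cone.} For each cone \(\tau\in\Sigma\), each of \(h_P,h_{Q_1},\ldots,h_{Q_r}\) agrees on \(\tau\) with some linear functional. Hence \(g|_\tau\) is the restriction of a linear functional \(\phi_\tau\), namely the same signed combination of those functionals.
  \item \emph{Vanishing on the generators.} Since \(U_\tau\subseteq U\), we have \(\phi_\tau(u)=g(u)=0\) for all \(u\in U_\tau\). Every \(x\in\tau=\operatorname{cone}(U_\tau)\) can be written as \(x=\sum_{u\in U_\tau}\mu_u u\) with \(\mu_u\ge0\). Linearity then gives \(g(x)=\phi_\tau(x)=\sum_u\mu_u\phi_\tau(u)=0\).
\end{enumerate}

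Completeness of \(\Sigma\) means its cones cover \(\RR^n\), so \(g\equiv0\) on all of \(\RR^n\), which is the desired identity \(h_P=\sum_i c_ih_{Q_i}\).

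The only point needing care is how to read \enquote{linear on every cone}. It should mean that the restriction to \(\tau\) agrees with a linear map on \(\RR^n\), or at least on \(\operatorname{span}\tau\), so that nonnegative combinations of the generators pass through the function. For a convex, positively homogeneous support function this holds, because on a cone where it is affine it agrees with a linear functional of the form \(x\mapsto p^\top x\) for a vertex or face point \(p\). I will state this reading explicitly. Even under the weaker \enquote{affine} reading, positive homogeneity together with \(g(0)=0\) forces the affine part to be linear, so the argument goes through either way. The trivial cone \(\{0\}\) needs no separate treatment, since \(g(0)=0\) automatically.
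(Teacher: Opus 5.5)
Your proof is correct and follows essentially the same route as the paper: form the difference \(f=h_P-\sum_i c_i h_{Q_i}\), use linearity on each cone to propagate vanishing from the generators \(U_\tau\) to all of \(\tau\), conclude by completeness of \(\Sigma\), and get the converse by evaluating at \(U\). Your extra remarks, identifying \((Ac-b)_u=-f(u)\) and reducing the affine reading of ``linear on every cone'' to the linear one via positive homogeneity, only make explicit what the paper leaves implicit.
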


\begin{proof}
  Let
  \[
    f\coloneqq h_P-\sum_{i=1}^r c_i h_{Q_i}.
  \]
  The restriction of \(f\) to each cone \(\tau\in\Sigma\) is linear. If \(Ac=b\), then \(f\) vanishes on \(U_\tau\). Every point of \(\tau\) is a nonnegative linear combination of elements of \(U_\tau\), so \(f\) vanishes on all of \(\tau\). Since the cones of \(\Sigma\) cover \(\RR^n\), we obtain \(f\equiv0\). The converse follows by evaluating the functional identity at the points of \(U\).
\end{proof}

\Cref{prop:finite-linear-system} therefore gives a direct search procedure. Let \(P\coloneqq\conv\{e_1,\dots,e_n\}\) be the standard simplex, choose candidate polytopes \(Q_1,\dots,Q_r\) whose support functions have two-hidden-layer representations, and choose a common fan on which all relevant support functions are linear. The proposition reduces the signed Minkowski identity to a rational linear system. A solution gives a two-hidden-layer representation of \(\maxfn{n}\). An inconsistent system rules out only the chosen candidate family.

\begin{remark}
  To recover the \(\maxfn{5}\) construction of~\cite{bakaevBetterNeuralNetwork2026}, take its cells and their nonempty intersections as the candidate summands \(Q_i\). Evaluating their support functions on generators of a common refinement of their normal fans produces \(Ac=b\). A solution recovers the signed coefficients directly from the pieces of the subdivision.
\end{remark}

\section{Structural properties of the ansatz}

\subsection{A necessary lower bound on \texorpdfstring{\(k\)}{k}}
The parameter \(k\) controls the expressivity of the ansatz, as shown in \Cref{lem:span-nested}.
At the same time, the number of pairs of multisets \((A, B) \in \mathcal{M}_{n,k} \times \mathcal{M}_{n,k}\) grows rapidly with \(k\), even after accounting for symmetries. It is therefore natural to ask for the smallest value of \(k\) that is not ruled out by theoretical obstructions. This question also has practical significance, since \(k\) directly governs the size of the system of linear equations we have to solve.

\begin{proposition}
  \label{prop:minkowski-difference-simplex-min-dim}
  Let \(S\subset\RR^N\) be a \(k\)-dimensional simplex. There do not exist polytopes \(P_1,\ldots,P_m\subset\RR^N\) with \(\dim P_r<k\) and coefficients \(c_r\in\RR\) such that
  \[
    h_S=\sum_{r=1}^m c_r h_{P_r}.
  \]
\end{proposition}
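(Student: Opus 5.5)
We will prove the statement by exhibiting a linear functional on support functions that is nonzero on \(h_S\) but vanishes on \(h_P\) for every polytope \(P\) with \(\dim P<k\). Such a functional is provided by the top-dimensional part of the polytope algebra. Concretely, we use mixed volumes, or equivalently the \(k\)-th order finite difference of support functions along directions. First reduce to the affine hull. Translating \(S\) changes \(h_S\) only by a linear function, and linear functions are support functions of points (dimension \(0<k\)). So we may assume \(0\in S\) and that \(L=\operatorname{aff}S\) is a \(k\)-dimensional linear subspace. Restricting the identity to directions \(u\in L\) does not help directly, because \(h_{P_r}|_L=h_{\pi_L P_r}\) and projections do not increase dimension. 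Hence each \(\pi_L P_r\) is a polytope in \(L\cong\RR^k\) of dimension \(<k\), while \(h_S|_L\) is the support function of a full-dimensional simplex in \(\RR^k\). Thus it suffices to treat the case \(N=k\).

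In \(\RR^k\), we use the valuation-type functional \(\Phi(h)\coloneqq\) the coefficient of the top-degree term in the polynomial \(t\mapsto \operatorname{vol}_k(\cdot)\). To keep it linear, define it via mixed volumes: for a signed combination \(f=\sum c_r h_{P_r}\), set
\[
  \Phi(f)\coloneqq\sum_{r_1,\ldots,r_k} c_{r_1}\cdots c_{r_k}\, V(P_{r_1},\ldots,P_{r_k}).
\]
This is the \(k\)-th power of a linear form, so it is not linear. Instead we argue by contradiction using Minkowski sums. Write the identity as \(S+\sum_{c_r<0}(-c_r)P_r=\sum_{c_r>0}c_rP_r\) (the cancellation law for Minkowski sums makes the support-function identity equivalent to this), and multiply by \(t>0\). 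Taking \(k\)-volumes of both sides and expanding by multilinearity of mixed volumes gives polynomial identities in the coefficients. The cleaner route is to view \(V(\cdot,\ldots,\cdot)\) as a symmetric multilinear form on the vector space spanned by support functions, which is well defined since mixed volume extends multilinearly to differences of support functions. Then \(\operatorname{vol}_k(S)=V(h_S,\ldots,h_S)=V(f,\ldots,f)\) with \(f=\sum c_rh_{P_r}\), and expanding gives a sum of terms \(V(P_{r_1},\ldots,P_{r_k})\).

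The key step is to show that every mixed volume \(V(P_{r_1},\ldots,P_{r_k})\) with all \(\dim P_{r_i}<k\) vanishes. This is false in general: \(k\) non-parallel segments have positive mixed volume. So a finer invariant is needed, and this is the main obstacle. The fix is to use the \emph{vertex-concentrated} part instead. Consider a vertex \(v\) of \(S\) with normal cone \(N_v\), a full-dimensional cone. Near a generic direction the function is linear, so we instead look at a simplicial cone where \(h_S\) has a ``corner'' of full rank: the \(k\) facets of \(S\). Take the \(k+1\) normal cones of vertices of \(S\); they meet along rays \(u_F\) (facet normals). Consider the dimension of the span of the gradients of \(f\) in a neighborhood of a point: for \(h_S\), all \(k+1\) vertices appear near the origin, and these affinely span a \(k\)-dimensional space. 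For each \(h_{P_r}\), the gradients near any point lie in a translate of \(\operatorname{aff}P_r\), which has dimension \(<k\). I would try to turn this into a contradiction with a \(k\)-th mixed finite-difference operator: \(\Delta_{v_1}\cdots\Delta_{v_k}h\) summed in a way that annihilates functions whose local gradients stay in a proper affine subspace. Failing that, I would fall back on the volume-polynomial approach, noting that \(V(f,\ldots,f)=\operatorname{vol}(S)>0\). I would also use the fact that the degree-\(k\) part of the McMullen polytope algebra is spanned by classes of \(k\)-dimensional polytopes and vanishes on lower-dimensional ones, with \(\log[S]\) having nonzero top component. I expect making this polytope-algebra argument precise to be the hardest step.
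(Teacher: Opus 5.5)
Your reduction to $N=k$ (restricting to directions in $L$, using that projections do not raise dimension) and your rewriting as a Minkowski identity $S+\sum_{c_r<0}(-c_r)P_r=\sum_{c_r>0}c_rP_r$ are correct, and they coincide with the opening of the paper's proof. After that point, however, the proposal contains no argument. What remains to be shown is that a full-dimensional simplex in $\RR^k$ cannot occur as a summand in such an identity when every other summand has dimension $<k$. The paper does not prove this by hand; it imports it from \cite[Corollary~5.2]{koutschanRepresentingPiecewiseLinear2025} (a simplex is not a zero summand). None of your candidate routes supplies this step:
\begin{itemize}
\item The mixed-volume route fails for the reason you give yourself. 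Your announced fallback to $V(f,\dots,f)=\operatorname{vol}(S)>0$ therefore yields nothing.
\item The gradient-span observation holds for each $h_{P_r}$ separately, but linear combinations destroy it. For a sum of $k$ linearly independent segments, the support function has $2^k$ gradients near the origin, and these affinely span $\RR^k$.
\item The polytope-algebra claim is wrong as stated. The map $P\mapsto[P]$ turns Minkowski sums into products, so it is not linear in $h_P$. The additive invariant $\log[P]$ lies in the degree-one part of McMullen's grading for every polytope, so $\log[S]$ has vanishing top component once $k\ge2$. Taking top-degree parts of $[S][Q]=[R]$ merely reproduces the mixed-volume expansion.
\end{itemize}

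Your finite-difference idea can be made to work, but only if you apply it to the right object and evaluate it at the apex of a pointed cone rather than near generic points. Near a relative-interior point of a wall of the normal fan of $S$, $h_S$ is a linear function plus a ridge function in $\langle e,u\rangle$, which $\Delta_{v_r}$ kills whenever $v_r\perp e$. So differences evaluated there detect nothing.

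Here is the set-up. Write $P_r\subset a_r+W_r$ with $W_r$ a proper subspace. Then $h_{P_r}(u+tv)=h_{P_r}(u)+t\langle v,a_r\rangle$ for $v\in W_r^\perp$. Fix an edge $e$ of $S$ and set $H=e^\perp$. For a CPWL function $g$, let $\beta_g$ be the jump of the directional derivative $\partial_e g$ across $H$; it is defined almost everywhere on $H$ and is linear in $g$. The three kinds of terms behave as follows:
\begin{itemize}
\item Linear functions have $\beta=0$.
\item $\beta_{h_{P_r}}$ vanishes almost everywhere unless $W_r^\perp\cap H$ contains some $v_r\neq0$, because the walls of the normal fan of $P_r$ all contain $W_r^\perp$. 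If such a $v_r$ exists, $\beta_{h_{P_r}}$ is invariant under translation by $v_r$.
\item Since no two edges of a simplex are parallel, $\beta_{h_S}=\abs{e}^2\mathbf{1}_\sigma$ almost everywhere. Here $\sigma\subset H$ is the normal cone of $e$, a pointed $(k-1)$-dimensional cone.
\end{itemize}

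Now derive the contradiction. Choose a functional $\ell$ on $H$ that is positive on $\sigma\setminus\set{0}$ and satisfies $\ell(v_r)\neq0$; flip signs so that $\ell(v_r)>0$. With $t_r>0$ and $T$ denoting translation, set $D=\prod_r(T_{t_rv_r}-\mathrm{id})$, the product running over those $r$ for which $v_r$ exists. Then $D$ annihilates $\sum_r c_r\beta_{h_{P_r}}$ almost everywhere. Now take $x=z-\sum_r t_rv_r$ with $z\in\operatorname{int}\sigma$ and $\ell(z)<\min_r t_r\ell(v_r)$. Every translate of $x$ occurring in $D$ other than $z$ has negative $\ell$-value and so lies outside $\sigma$. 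Hence $D(\abs{e}^2\mathbf{1}_\sigma)(x)=\abs{e}^2\neq0$ on a nonempty open set, which contradicts the identity.
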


\begin{proof}
  Let \(L\coloneqq\operatorname{span}(S-S)\) and let \(\pi\colon\RR^N\to L\) be the orthogonal projection. Restricting the assumed identity to \(L\) gives
  \[
    h_{\pi(S)}=\sum_{r=1}^m c_r h_{\pi(P_r)},
  \]
  since \(h_{\pi(P)}(u)=h_P(u)\) for every \(u\in L\). The polytope \(\pi(S)\) is a \(k\)-dimensional simplex in \(L\), whereas
  \[
    \dim\pi(P_r)\le \dim P_r<k.
  \]
  After identifying \(L\) with \(\RR^k\), separate the positive and negative coefficients in the support-function identity. Additivity of support functions under Minkowski sums and positive dilations gives
  \[
    h_{\pi(S)+\sum_{c_r<0}(-c_r)\pi(P_r)}
    =
    h_{\sum_{c_r>0}c_r\pi(P_r)}.
  \]
  Since a compact convex set is determined by its support function, it follows that
  \[
    \pi(S)+\sum_{c_r<0}(-c_r)\pi(P_r)
    =
    \sum_{c_r>0}c_r\pi(P_r).
  \]
  Every polytope appearing in the two sums has dimension strictly less than \(k\), and hence has zero volume in \(L\). Thus \(\pi(S)\) is a zero summand, contradicting \cite[Corollary~5.2]{koutschanRepresentingPiecewiseLinear2025}.
\end{proof}

\begin{corollary}
  \label{cor:min-k}
  The exact systems assembled from the ansatz above can have a solution only if
  \[
    k \ge k_{\min}
    \coloneqq \floor*{\frac{n-1}{2}}.
  \]
  Thus, \(k_{\min}\) is the smallest value of \(k\) not ruled out by \Cref{prop:minkowski-difference-simplex-min-dim}.
\end{corollary}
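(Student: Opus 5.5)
The plan is to realize each symmetrized block \(F_{A,B}\) as the support function of an explicit polytope, bound the dimension of that polytope in terms of \(k\), and then invoke \Cref{prop:minkowski-difference-simplex-min-dim} with the standard simplex \(S=\conv\set{e_1,\ldots,e_n}\), which has dimension \(n-1\).

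First, identify the polytopes. Each \(m_{ij}\) is the support function of the segment \(\segment{e_i}{e_j}\), which is a point when \(i=j\). Hence \(\sum_{(i,j)\in A}m_{ij}\) is the support function of the Minkowski sum \(Z_A\coloneqq\sum_{(i,j)\in A}\segment{e_i}{e_j}\). Then \(\Phi_{A,B}=h_{\conv(Z_A\cup Z_B)}\), and composing with a permutation gives \(\Phi_{A,B}(\sigma x)=h_{\sigma^{-1}(\conv(Z_A\cup Z_B))}(x)\). Therefore \(F_{A,B}\) is a sum of support functions of polytopes \(P_\sigma\coloneqq\sigma^{-1}\conv(Z_A\cup Z_B)\).

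We do not need to merge these into a single polytope. A solution \(\maxfn{n}=\sum\lambda_{A,B}F_{A,B}\) directly gives an identity \(h_S=\sum_r c_r h_{P_r}\), in which each \(P_r\) is one of the polytopes \(P_\sigma\).

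Next, bound the dimensions. The zonotope \(Z_A\) has dimension at most \(k\), since it is a sum of \(k\) segments. The convex hull of \(Z_A\cup Z_B\) has dimension at most \(\dim Z_A+\dim Z_B+1\le 2k+1\). Permutations preserve dimension. So every \(P_r\) has dimension at most \(2k+1\).

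If \(k<\floor*{(n-1)/2}\), then \(2k+1<n-1\), because \(k\le\floor*{(n-1)/2}-1\) gives \(2k+2\le n-1\). \Cref{prop:minkowski-difference-simplex-min-dim} then forbids such an identity, so the system has no solution.

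The main point to check is exactly this arithmetic. The bound \(2k+1<n-1\) is equivalent to \(k<(n-2)/2\), that is, \(k\le\ceil*{(n-2)/2}-1=\floor*{(n-1)/2}-1\), which agrees with the threshold above.

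Two further remarks. The reduction to the sorted cone and the ReLU-term elimination are equivalent reformulations of the functional identity \(\maxfn{n}=\sum\lambda_{A,B}F_{A,B}\) on \(\RR^n\), by \Cref{prop:symmetric-reduction}. So the obstruction applies to the linear system itself.

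The final sentence of the corollary, that \(k_{\min}\) is the smallest value not ruled out, only asserts that this particular dimension obstruction is silent once \(k\ge k_{\min}\). This holds because at \(k=k_{\min}\) we have \(2k+1\ge n-1\), so the hypothesis \(\dim P_r<n-1\) of \Cref{prop:minkowski-difference-simplex-min-dim} is no longer guaranteed.
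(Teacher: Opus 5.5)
Your proposal is correct and follows the paper's proof essentially step for step. Both arguments write \(\Phi_{A,B}=h_{\conv(Z_A\cup Z_B)}\) with \(\dim\conv(Z_A\cup Z_B)\le 2k+1\), expand the symmetrized blocks into their summands, and apply \Cref{prop:minkowski-difference-simplex-min-dim} to the \((n-1)\)-dimensional standard simplex; your explicit treatment of the permuted copies \(\sigma^{-1}\conv(Z_A\cup Z_B)\) and of the equivalence \(2k+1\ge n-1 \iff k\ge\floor*{(n-1)/2}\) only spells out details the paper leaves implicit.
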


\begin{proof}
  For each \((i,j)\), \(m_{ij}=h_{\segment{e_i}{e_j}}\). Hence
  \[
    \sum_{(i,j)\in A} m_{ij}=h_{Z_A},
    \qquad \text{where}\qquad
    Z_A \coloneqq \sum_{(i,j)\in A}[e_i,e_j],
  \]
  and \(\dim Z_A\le k\). Therefore
  \[
    \Phi_{A,B}
    =\max\set{h_{Z_A},h_{Z_B}}
    =h_{\conv(Z_A\cup Z_B)}.
  \]
  Since \(Z_A\) and \(Z_B\) both have dimension at most \(k\), their convex hull has dimension at most \(2k+1\). Consequently, after expanding the symmetrized blocks into their summands, any solution of the exact system would express
  \[
    \max(x_1,\ldots,x_n)
    =h_{\conv\set{e_1,\ldots,e_n}}
  \]
  as a signed linear combination of support functions of polytopes of dimension at most \(2k+1\). The polytope \(\conv\set{e_1,\ldots,e_n}\) is an \((n-1)\)-dimensional simplex. Hence, if \(2k+1<n-1\), this representation contradicts \Cref{prop:minkowski-difference-simplex-min-dim}. We must therefore have
  \[
    2k+1\ge n-1,
  \]
  which is equivalent to \(k\ge\floor{(n-1)/2}\).
\end{proof}

\subsection{Nestedness of the search spaces}
The parameter \(k\) does control the complexity of our ansatz, in the fallowing we prove, that increasing \(k\) does not decrease the expressive power of the ansatz, i.e. if there is a solution for some \(k_0\) then also for all \(k > k_0\).

\begin{lemma}
  \label{lem:span-nested}
  Fix \(n\). The spaces \(V_{n,k}\) defined in \Cref{sec:ansatz} are nested: \(V_{n,k} \subseteq V_{n,k+1}\).
\end{lemma}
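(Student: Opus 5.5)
The plan is to show that every generator \(F_{A,B}\) of \(V_{n,k}\), with \(A,B\in\mathcal{M}_{n,k}\), is a linear combination of generators of \(V_{n,k+1}\). Since \(V_{n,k}\) is the span of these generators, this gives \(V_{n,k}\subseteq V_{n,k+1}\).

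The tool is translation invariance of the maximum: \(\max\set{u+w,v+w}=\max\set{u,v}+w\). Fix any pair \(e=(i,j)\in\mathcal{E}_n\), for instance \(e=(1,1)\). Let \(A'\coloneqq A\uplus\mset{e}\) and \(B'\coloneqq B\uplus\mset{e}\) denote the multiset unions. Then \(A',B'\in\mathcal{M}_{n,k+1}\). Adding the same summand \(m_e\) to both sums inside \(\Phi\) gives the pointwise identity
\[
  \Phi_{A',B'}(x)=\Phi_{A,B}(x)+m_e(x)\qquad\text{for all }x\in\RR^n .
\]
Summing this over all \(\sigma\in\symgrp{n}\) at the argument \(\sigma x\) yields
\[
  F_{A',B'}=F_{A,B}+G_e,
  \qquad\text{where}\qquad
  G_e(x)\coloneqq\sum_{\sigma\in\symgrp{n}}m_e(\sigma x).
\]

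It then remains to show that the correction term \(G_e\) lies in \(V_{n,k+1}\). For this, take \(C\in\mathcal{M}_{n,k+1}\) to be the multiset consisting of \(k+1\) copies of \(e\), and choose \(A=B=C\). Both arguments of the maximum in \(\Phi_{C,C}\) coincide, so
\[
  \Phi_{C,C}=(k+1)\,m_e
  \qquad\text{and hence}\qquad
  F_{C,C}=(k+1)\,G_e .
\]
Combining the two displays gives
\[
  F_{A,B}=F_{A',B'}-\tfrac{1}{k+1}F_{C,C}\in V_{n,k+1}.
\]

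There is no substantial obstacle. The only points to check are that multiset union keeps both cardinalities equal to \(k+1\), and that the degenerate choice \(A=B\) is allowed in \(\mathcal{M}_{n,k+1}\times\mathcal{M}_{n,k+1}\), which it is by definition. The coefficients are rational, so nestedness holds over \(\QQ\) as well. Consequently a rational solution for some \(k_0\) yields rational solutions for all \(k>k_0\).
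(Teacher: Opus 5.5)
Your proposal is correct and matches the paper's proof essentially step for step. You pad both multisets with a common pair \(p\), use \(\Phi_{A\uplus\{p\},B\uplus\{p\}}=\Phi_{A,B}+m_p\), and absorb the symmetrized correction term via \(F_{C,C}\), where \(C\) consists of \(k+1\) copies of \(p\), which gives \(F_{A,B}=F_{A^+,B^+}-\tfrac{1}{k+1}F_{C,C}\in V_{n,k+1}\).
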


\begin{proof}
  It suffices to show that every generator of \(V_{n,k}\) belongs to \(V_{n,k+1}\). Fix \(A,B \in \mathcal{M}_{n,k}\) and choose any index pair \(p=(i,j) \in \mathcal{E}_n\). Let \(\uplus\) denote multiset union, i.e., addition of multiplicities, and define
  \[
    A^+ \coloneqq A \uplus \set{p},
    \qquad
    B^+ \coloneqq B \uplus \set{p},
  \]
  so that \(A^+,B^+ \in \mathcal{M}_{n,k+1}\). Adding the same term \(m_p\coloneqq\max(x_i,x_j)\) to both multisets gives
  \[
    \Phi_{A^+,B^+}=\Phi_{A,B}+m_p.
  \]
  Symmetrizing both sides yields, for all $x\in \RR^n$,
  \[
    F_{A^+,B^+}(x)
    =
    F_{A,B}(x)+\sum_{\sigma \in \symgrp{n}} m_p(\sigma x).
  \]

  Now let \(C\) be the multiset consisting of \(k+1\) copies of \(p\). Since \(C \in \mathcal{M}_{n,k+1}\),
  \[
    F_{C,C}=(k+1)\sum_{\sigma \in \symgrp{n}} m_p(\sigma x).
  \]
  Therefore,
  \[
    F_{A,B}=F_{A^+,B^+}-\frac{1}{k+1}F_{C,C}.
  \]
  Both terms on the right-hand side belong to \(V_{n,k+1}\), and hence \(F_{A,B} \in V_{n,k+1}\). Because \(F_{A,B}\) was an arbitrary generator of \(V_{n,k}\), it follows that \(V_{n,k} \subseteq V_{n,k+1}\).
\end{proof}

\section{Width of the construction}
\label{sec:construction-width}

The constructions above are designed for a tractable search for two-hidden-layer representations; width is not optimized. Suppose that the certificate has support size \(s\). Expanding each supported function \(F_{A,B}\) into its permutation sum produces at most \(n!\) labelled blocks. The second hidden layer therefore contains at most
\[
  Q\leq s n!
\]
rank-\(2\) maxout units. By sharing the required pairwise maxima, the first hidden layer has width at most
\[
  \binom{n+1}{2}
  =
  \bigO(n^2)
\]
and the total number of hidden maxout units is at most
\[
  \binom{n+1}{2}+Q.
\]
For the corresponding standard ReLU network without skip connections, the first hidden layer has width at most
\[
  \binom{n}{2}+2n
  =
  \bigO(n^2)
\]
and second-layer width at most \(3Q\leq 3s n!\). Thus the direct symmetrized realization has quadratic width in the first hidden layer and, in the worst case, factorial width in the second. We did not attempt to optimize this crude upper bound, which remains far above the quadratic lower bounds for two-hidden-layer representations established by \cite{safranDepthHierarchyComputing2026}.

\end{document}